\documentclass{article}
\usepackage[accepted]{icml2016}
\usepackage{rbase}

\usepackage{times}
\usepackage{apalike}
\usepackage{natbib} 

\usepackage{amsmath}
\usepackage{amsthm}
\usepackage{amssymb}
\usepackage{stmaryrd}

\usepackage{tablefootnote}
\usepackage{hyperref}

\usepackage{graphicx}
\usepackage{xypic}
\usepackage[all,2cell,ps]{xy}
\usepackage{tikz}
\usepackage{epsfig}
\usepackage{enumerate}
\usepackage{stmaryrd}
\usepackage{bbm}
\usepackage{algorithm}
\usepackage{algorithmic}
\usepackage{color}
\usepackage{paralist}
\usepackage{multicol}
\usepackage{soul}
\usepackage{adjustbox}

\newtheorem{thm}{Theorem}
\newtheorem{prop}{Proposition}

\newtheorem{defn}{Definition}

\renewcommand{\df}[1]{\textbf{#1}}
\renewcommand{\x}{\mathbf{x}}
\renewcommand{\y}{\mathbf{y}}
\newcommand{\z}{\mathbf{z}}

\newcommand{\anonym}[1]{}

\newcommand{\kmls}[1]{\mathfrak{K}_{#1}}
\newcommand{\klg}{k_{\textrm{LG}}}
\newcommand{\kflg}{k_{\textrm{FLG}}}
\newcommand{\kmlg}{\mathfrak{K}}

\setlength{\partopsep}{-2pt}
\setlength{\pltopsep}{-2pt}
\setlength{\plpartopsep}{0pt}
\setlength{\plitemsep}{0pt}

\newcommand{\smallpull}{\vspace{-6pt}}
\newcommand{\bigpull}{\vspace{-12pt}}

\newcommand{\Gfrak}{\mathfrak{G}}

\begin{document}
\twocolumn[
\icmltitle {The Multiscale Laplacian Graph Kernel}     
\icmlauthor{Risi Kondor}{risi@cs.uchicago.edu}
\icmladdress{Department of Computer Science and Department of Statistics,
            University of Chicago}
\icmlauthor{Horace Pan}{hopan@uchicago.edu}
\icmladdress{Department of Computer Science,
            University of Chicago}

] 

\begin{abstract}
Many real world graphs, such as the graphs of molecules, exhibit structure at multiple different scales, 
but most existing kernels between graphs are either purely local or purely global in character.  
In contrast, by building a hierarchy of nested subgraphs, 
the Multiscale Laplacian Graph kernels (MLG kernels) that we define in this paper can account for structure 
at a range of different scales. 
At the heart of the MLG construction is another new graph kernel, 
called the Feature Space Laplacian Graph kernel (FLG kernel),  
which has the property that it can lift a base kernel defined on the vertices of two graphs 
to a kernel between the graphs. 
The MLG kernel applies such FLG kernels to subgraphs recursively. 
To make the MLG kernel computationally feasible, we also introduce a randomized projection 
procedure, similar to the Nystr\"om method, but for RKHS operators. 
\end{abstract}

\section{Introduction}\label{sec:intro}

There is a wide range of problems in applied machine learning from web data mining \cite{InoWasMot03} to protein 
function prediction \cite{BorOngSchVisetal05} where the input space is a space of graphs. 
A particularly important application domain is chemoinformatics, where the graphs capture the structure of molecules.  
In the pharamceutical industry, for example, machine learning algorithms are regularly used to 
screen candidate drug compounds for safety and efficacy against specific diseases \cite{Kubinyi03}.  

Because kernel methods neatly separate the issue of data representation from the statistical learning component, 
it is natural to formulate graph learning problems in the kernel paradigm. 
Starting with \cite{Gaertner02}, a number of different graph kernels have been appeared in the literature 
(for an overview, see \cite{VisBorKonSch10}). 
In general, a graph kernel \m{k(\Gcal_1,\Gcal_2)} must satisfy the following three requirements:
\begin{compactenum}[(a)]
\item The kernel should capture the right notion of similarity between \m{\Gcal_1} and \m{\Gcal_2}. 
For example, if \m{\Gcal_1} and \m{\Gcal_2} are social networks, then \m{k} might capture to what extent 
clustering structure, degree distribution, etc.\;match up between them. 
If, on the other hand, \m{\Gcal_1} and \m{\Gcal_2} are molecules, then we are probably more interested in what 
functional groups are present in both, and how they are arranged relative to each other.
\item The kernel is usually computed from the adjacency matrices \m{A_1} and \m{A_2} of the two graphs, but 
(unless the vertices are explicitly labeled), it must be invariant to their ordering. 
In other words, writing the kernel explicitly in terms of \m{A_1} and \m{A_2}, we must have  
\m{k(A_1,A_2)\<=k(A_1,P A_2 P^\top)} for any permutation matrix \m{P}. 
\item The kernel should be efficiently computable. 
The time complexity of many graph kernels  is \m{O(n^3)}, 
where \m{n} is the number of vertices of the larger of the two graphs. 
However, when dealing with large graphs, we might only be able to afford \m{O(n^2)} or even \m{O(n)} complexity. 
On the other hand, in chemoinformatics applications, \m{n} might 
only be on the order of a \m{100}, permitting the use of more expensive kernels.   
\end{compactenum} 
Of these three requirements, the second one (permutation invariance) has proved to be the central constraint  
around which much of the graph kernels literature is organized. 

In combinatorics, any function \m{\phi(A)} that is invariant to reordering the vertices 
(i.e., \m{\phi(PAP^\top)=\phi(A)} for any permutation matrix \m{P}) is called a graph invariant \cite{MikkonenGraphicValues}.  
The permutation invariance requirement effectively stipulates that graph kernels must be built out of 
graph invariants. 
In general, efficiently computable graph invariants offered by the mathematics literature tend 
to fall in one of two categories:
\begin{compactenum}[(a)]
\item Local invariants, which can often be reduced to simply counting some local properties, 
such as the number of triangles, squares, etc. that appear in \m{\Gcal} as subgraphs. 
\item Spectral invariants, which can be expressed as functions of the eigenvalues of 
the adjacency matrix or the graph Laplacian.
\end{compactenum}
Correspondingly, while different graph kernels are motivated in very different ways from 
random walks \cite{Gaertner02} through shortest paths \cite{BorKri05, FerEtal13}
to Fourier transforms on the symmetric group \cite{KonBor08}, 
ultimately most graph kernels also reduce to computing a function of the two graphs that is either purely local or 
purely spectral. 
For example all kernels based on the ``subgraph counting'' idea (e.g., \cite{ShervEtal09}) 
are local. 
On the other hand, most of the random walk based kernels are reducable to 
a spectral form involving the eigenvalues of either the two graphs individually, or their Kronecker 
product \cite{VisBorKonSch10} and therefore are really only sensitive to the large scale structure of graphs.

In practice, it would be desirable to have a kernel that is inbetween these two extremes, 
in the sense that it can take structure into account at \emph{multiple different scales}. 
A kernel between molecules, for 
example, must be sensitive to the overall large-scale shape of the graphs 
(whether they are more like a chain, a ring, a chain that branches, etc.), but also to what smaller structures 
(e.g., functional groups) are present in the graphs, and how they are related to the global structure 
(e.g., whether a particular functional group is towards the middle or one of the ends of the chain). 

For the most part, such a multiscale graph kernel has been missing from the literature. 
One notable exception is the Weisfeiler--Lehman kernel \cite{ShervEtal11}, 
which uses a combination of message passing and hashing to build summaries of the local neighborhood 
vertices at different scales. 
However, in practice, the message passing step is usually only iterated a relatively small number of 
times, so the Weisfeiler--Lehman kernel is still mostly local. 
Moreover, the hashing step is somewhat ad-hoc and does not give rise to well behaved, local summaries: 
perturbing the edges by a small amount leads to completely different hash features. 

In this paper we present a new graph kernel, the Multiscale Laplacian Graph Kernel (MLG kernel), 
which, we believe, is the first kernel in the literature that can truly compare structure in graphs  
simultaneously at multiple different scales. 
We begin by defining a simpler graph kernel, called the Feature Space Laplacian Graph Kernel (FLG kernel) 
that only operates at a single scale (Section \ref{sec: spectral}). 
The FLG kernel combines two sources of information: a partial labeling of the nodes in terms of vertex features, 
and topological information about the graph supplied by its Laplacian. 
An important property of the the FLG kernel is 
that it can work with vertex labels provided implicitly, in terms of a ``base kernel'' on the vertices. 
Crucially, this makes it possible to apply the FLG kernel recursively. 


The Multiscale Laplacian Graph Kernel (MLG kernel), which is the central object of the paper and is defined in 
Section \ref{sec: multiscale}, uses exactly this recursive property of the FLG kernel to build a hierarchy of subgraph  kernels 
that are not only sensitive to the topological relationships between individual vertices, but also between 
subgraphs of increasing sizes. Each kernel is defined in terms of the preceding kernel in the hierarchy.

Efficient computability is a major concern in our paper, and recursively defined kernels, especially on combinatorial 
data structures, can be very expensive. Therefore, in Section \ref{sec: lowrank} we describe a strategy 
based on a combination of linearizing each level of the kernel (relative to a given dataset) and a randomized 
low rank projection, that reduces every stage of the kernel computation to simple operations involving 
small matrices, leading to a very fast algorithm. Finally, section \ref{sec: experiments} presents experimental  
comparisons of our kernel with competing methods. 

\section{Laplacian Graph Kernels}\label{sec: spectral}

Let \m{\Gcal} be a weighted undirected graph with vertex set \m{V=\cbr{\sseq{v}{n}}} and edge set \m{E}. 
Recall that the graph Laplacian of \m{\Gcal} is an \m{n\times n} 
matrix \m{L^\Gcal}, with 
\[L^\Gcal_{i,j}=
\begin{cases}
-w_{i,j} &\Tif~~\cbr{v_i,v_j}\tin E\\
\sum_{j\,:\,\cbr{v_i,v_j}\in E}w_{i,j}& \Tif~~ i\<=j \\
0&\Totherwise, 
\end{cases}
\]
where \m{w_{i,j}} is the weight of edge \m{\cbr{v_i,v_j}}. 
The graph Laplacian is positive semi-definite, and 
in terms of the adjacency matrix \m{A} and the weighted degree matrix \m{D}, 
it can be expressed as \m{L\<=D\<-A}. 

Spectral graph theory tells us that the low eigenvalue eigenvectors of \m{L^\Gcal} 
(the ``low frequency modes'') are informative about the overall shape of \m{\Gcal}. 
One way of seeing this is to note that for any vector \m{\z\tin \RR^n}
\[\z^\top\! L^\Gcal\, \z=\sum_{\cbr{i,j}\in E} w_{i,j} (z_i-z_j)^2,\]
so the low eigenvalue eigenvectors are the smoothest functions on \m{\Gcal}, in the sense that they 
vary the least between adjacent vertices. 
An alternative interpretation emerges if we  
use \m{\Gcal} to construct a Gaussian graphical model (Markov Random Field or MRF) 
over \m{n} variables \m{\sseq{x}{n}} with 
clique potentials \m{\phi(x_i,x_j)=e^{-w_{i,j}(x_i-x_j)^2/2}}  
for each edge and \m{\psi(x_i)=e^{-\eta x_i^2/2}} for each vertex. 
The joint distribution of \m{\x=\br{\sseq{x}{n}}^\top} is then 
\begin{multline}\label{eq: gm}
p(\x)\propto \brBig{\prod_{\cbr{v_i,v_j}\in E} e^{-w_{i,j}(x_i\<-x_j)^2/2}}  
\brBig{\prod_{v_i\in V} e^{-\eta x_i^2/2}}=\\ 
e^{-\x^\top\!\nts  (L+\eta I)\ts \x/2},
\end{multline}
showing that the covariance matrix of \m{\x} is \m{(L^\Gcal\<+\eta I)^{-1}}. 
Note that the \m{\psi} factors were only added to ensure that the distribution is normalizable, and 
\m{\eta} is typically just a small constant ``regularizer'':  
\m{L^\Gcal} actually has a zero eigenvalue eigenvector (namely the constant vector \m{n^{-1/2}(1,1,\ldots,1)^\top}), 
so without adding \m{\eta I} we would not be able to invert it. In the following we will call 
\m{L^\Gcal\<+\eta I} the regularized Laplacian, and denote it simply by \m{L}. 


Both the above views suggest that 
if we want define a kernel between graphs that is sensitive to their overall shape, 
comparing the low eigenvalue eigenvectors of their Laplacians is a good place to start.  
Following the MRF route, 
given two graphs \m{\Gcal_1} and \m{\Gcal_2} of \m{n} vertices, we can define the kernel between them 
to be a kernel between the corresponding distributions \m{p_1\<=\Ncal(0,L_1^{-1})} and \m{p_2\<=\Ncal(0,L_2^{-1})}.    
Specifically, we use the Bhattacharyya kernel 
\begin{equation}\label{eq: bhatta}
k(p_1,p_2)=\int \sqrt{p_1(x)} \sqrt{p_2(x)}\, dx,
\end{equation}
because for Gaussian distributions it can be computed in closed form \cite{JebKon03}, giving 
\[k(p_1,p_2)=\fr{\absbig{\nts \br{\ovr{2}L_1\<+\ovr{2} L_2}^{-1}\nts }^{1/2}}
{\absbig{\nts L_1^{-1}\nts }^{1/4}\; \absbig{\nts L_2^{-1}\nts }^{1/4}}\;. \]
If some of the eigenvalues of \m{L_1^{-1}} or \m{L_2^{-1}} are zero or very close to zero, 
along certain directions in space the two distributions in \rf{eq: bhatta} 
become very flat, leading to vanishingly small kernel values (unless the ``flat'' directions of the 
two Gaussians are perfectly aligned).  
To remedy this problem, similarly to \cite{KonJeb03}, 
we ``soften'' (or regularize) the kernel by adding some small constant \m{\gamma} times the identity to \m{L_1^{-1}} and \m{L_2^{-1}}. 
This leads to what we call the Laplacian Graph Kernel. 

\begin{defn}
Let \m{\Gcal_1} and \m{\Gcal_2} be two graphs of \m{n} vertices with (regularized) 
Laplacians \m{L_1} and \m{L_2}, respectively.  
We define the \df{Laplacian graph kernel (LG kernel)} with parameter \m{\gamma} 
between \m{\Gcal_1} and \m{\Gcal_2} as \smallpull 
\begin{equation}\label{eq: LG}
\klg(\Gcal_1,\Gcal_2)=\fr{\absbig{\nts \br{\ovr{2}S_1^{-1}\<+\ovr{2} S_2^{-1}}^{-1}\nts }^{1/2}}
{\absN{\nts S_1\nts }^{1/4}\; \absN{\nts S_2\nts }^{1/4}}\;, 
\end{equation}
where \m{S_1\<=L_1^{-1}\nts \<+\gamma I} and \m{S_2\<=L_2^{-1}\nts \<+\gamma I}. 
\end{defn}

By virtue of \rf{eq: bhatta}, the LG kernel is guaranteed to be positive semi-definite,  
and because the value of the overlap integral \rf{eq: bhatta} is largely determined 
by the extent to which the subspaces spanned by the largest eigenvalue eigenvectors 
of \m{L_1^{-1}} and \m{L_2^{-1}} 
are aligned, it effectively captures similarity between the overall shapes of \m{\Gcal_1} and \m{\Gcal_2}. 
However, the LG kernel does suffer from three major limitations:
\begin{compactenum}[1.]
\item It assumes that both graphs have exactly the same number of vertices. 
\item It is only sensitive to the overall structure of the two graphs, and not to how the two graphs 
compare at more local scales.
\item It is not invariant to permuting the vertices. 
\end{compactenum}
Our goal for the rest of this paper is to overcome each of these limitations, while retaining the 
LG kernel's attractive spectral interpretation. 

\subsection{Feature space LG kernel}

In the probabilistic view of the LG kernel, every graph generates random vectors \m{\x=\br{\sseq{x}{n}}^\top}  
according to \rf{eq: gm}, and the kernel between two graphs is determined by  
comparing the corresponding distributions. 
The invariance problem arises because the ordering of the variables \m{\sseq{x}{n}} is arbitrary: 
even if \m{\Gcal_1} and \m{\Gcal_2} are topologically the same, \m{\klg(\Gcal_1,\Gcal_2)} might 
be low if their vertices happen to be numbered differently. 

One of the central ideas of this paper is to address this issue by transforming from the 
``vertex space variables''  
\m{\sseq{x}{n}} to ``feature space variables''   
\m{\sseq{y}{m}}, where \m{y_i=\sum_j t_{i,j}(x_j)}, 
and each \m{t_{i,j}} only depends on \m{j} through local and reordering invariant 
properties of vertex \m{v_j}. 
If we then compute an analogous kernel to the LG kernel, but now between the distributions of the 
\m{\y}'s rather than the \m{\x}'s, the resulting kernel will be permutation invariant.  

In the simplest case, each \m{t_{i,j}} is linear, i.e.,  
\m{t_{i,j}(x_j)=\phi_i(v_j)\cdot x_j}, where 
\m{(\phi_1,\ldots,\phi_m)} is a collection of \m{m}
local (and permutation invariant) vertex features. 
For example, \m{\phi_i(v_j)} may be the degree of vertex \m{v_j}, or the value of 
\m{h^\beta(v_j,v_j)}, where \m{h} is the 
diffusion kernel on \m{\Gcal} with length scale parameter \m{\beta} (c.f., \cite{Alexa2009}). 
In the chemoinformatics setting, the \m{\phi_i}'s might be some way of encoding what type of atom is 
located at vertex \m{v_j}. 

The linear transform of a multivariate normal random variable is multivariate normal. In particular, 
in our case, letting \m{U=(\phi_i(v_j))_{i,j}}, we have  
\m{\EE(\y)\<=0} and \m{\Cov(\y,\y)\<=U\ts \Cov(\x,\x)\ts U^\top\!\<=U L^{-1} U^\top},   
leading to the following 
kernel, which is the workhorse of the present paper. 

\begin{defn}\label{def: FLG}
Let \m{\Gcal_1} and \m{\Gcal_2} be two graphs with 
regularized Laplacians \m{L_1} 
and \m{L_2}, respectively, \m{\gamma\geq 0} 
a parameter, and \m{(\sseq{\phi}{m})} a collection of \m{m} local vertex features. 
Define the corresponding feature mapping matrices  
\[[U_1]_{i,j}=\phi_i(v_j)\qquad [U_2]_{i,j}=\phi_i(v'_j) \]
(where \m{v_j} is the \m{j}'th vertex of \m{\Gcal_1} and \m{v_j'} is the \m{j}'th vertex of \m{\Gcal_2}). 
The corresponding \df{Feature space Laplacian graph kernel (FLG kernel)} is 
\begin{equation}\label{eq: FLG}
\kflg(\Gcal_1,\Gcal_2)=\fr{\absbig{\nts \br{\ovr{2}S_1^{-1}\<+\ovr{2} S_2^{-1}}^{-1}\nts }^{1/2}}
{\absN{\nts S_1\nts }^{1/4}\; \absN{\nts S_2\nts }^{1/4}}\;, 
\end{equation}
where \m{S_1\<=U_1 L_1^{-1} U_1^\top \nts \<+\gamma I}\; and\; \m{S_2\<=U_2 L_2^{-1} U_2^\top \nts \<+\gamma I}. 
\end{defn}

Since the \m{\sseq{\phi}{m}} vertex features, by definition, are local and invariant to 
vertex renumbering, the FLG kernel is permutation invariant. 
Moreover, because the distributions \m{p_1} and \m{p_2} now live in the space of features rather than the 
space defined by the vertices, there is no problem with applying the kernel to two graphs 
with different numbers of vertices. 

Similarly to the LG kernel, the FLG kernel also captures information about the global shape of graphs. 
However, whereas, intuitively, the former encodes information such as ``\m{\Gcal} is an elongated graph 
with vertex number \m{i} towards one and and vertex number \m{j} at the other'', the FLG kernel can capture 
information more like ``\m{\Gcal} is elongated with low degree vertices at one end and high degree 
vertices at the other''. 
The major remaining shortcoming of the FLG kernel  
is that it cannot take into account structure at multiple different scales. 

\subsection{The ``kernelized'' LG kernel}\label{sec: kernelized}

The key to boosting \m{\kflg} to a multiscale kernel is that it itself can be ``kernelized'', i.e.,  
it can be computed from just the inner products between the feature vectors of the vertices 
(which we call the base kernel) without having to know the 
actual \m{\phi_i(v_j)} features values.  

\ignore{
\begin{defn}
Let \m{\Gcal_1} and \m{\Gcal_2} be two graphs with vertex sets \m{V_1=\cbrN{\sseq{v^{(1)}}{n_1}}} and 
\m{V_2=\cbrN{\sseq{v^{(2)}}{n_2}}}, respectively, and \m{(\sseq{\phi}{m})} a collection of 
local vertex features, as in Definition \ref{def: FLG}. 
We define the \df{base kernel}  
\[\kappa\colon (V_1\cup V_2)\times (V1_\cup V_2)\to\RR\]
\end{defn}
}

\begin{defn}
Given a collection \m{\V \phi=(\sseq{\phi}{m})^\top} of local vertex features, we define the corresponding 
\df{base kernel} \m{\kappa} between two vertices \m{v} and \m{v'} as the dot product of their feature vectors:\;    
\m{\kappa(v,v')\<=\V \phi(v)\cdot \V \phi(v')}.
\end{defn}

Note that in this definition \m{v} and \m{v'} may be two vertices of the same graph, or of two different graphs. 
We first show that, similarly to the Representer Theorem for other kernel methods \cite{SchSmo02}, 
to compute \m{\kflg(\Gcal_1,\Gcal_2)} 
one only needs to consider the subspace of \m{\RR^m} spanned by the feature vectors of their vertices. 

\begin{prop}\label{prop: subspace}
Let \m{\Gcal_1} and \m{\Gcal_2} be two graphs with vertex sets 
\m{V_1=\cbrN{v_{1}\ldots v_{n_1}}} and \m{V_2=\cbrN{v'_{1}\ldots v'_{n_2}}}, and let 
\m{\cbr{\sseq{\xi}{p}}} be an orthonormal basis for the subspace 
\vspace{-6pt}
\[W=\Tspan\cbrbig{\V\phi(v_{1}),\ldots,\V\phi(v_{n_1}),\V \phi(v'_{1}),\ldots,\V\phi(v'_{n_2})}.\vspace{-4pt}\]
Then, \rf{eq: FLG} can be rewritten as \smallpull
\begin{equation}\label{eq: subspace}
\kflg(\Gcal_1,\Gcal_2)=\fr{\absbig{\nts \brbig{\ovr{2}\wbar{S}_1^{-1}\<+\ovr{2} \wbar{S}_2^{-1}}^{-1}\nts }^{1/2}}
{\absN{\nts \wbar{S}_1\nts }^{1/4}\; \absN{\nts \wbar{S}_2\nts }^{1/4}}\;, 
\vspace{-6pt}\end{equation}
where \m{[\wbar{S}_1]_{i,j}\<=\xi_i^\top\! S_1 \xi_j} and \m{[\wbar{S}_2]_{i,j}\<=\xi_i^\top\! S_2 \xi_j}. 
In other words, \m{\wbar{S}_1} and \m{\wbar{S}_2} are the projections of \m{S_1} and \m{S_2} to \m{W}. 
\end{prop}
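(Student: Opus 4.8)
The plan is to exploit the fact that every column of the feature-mapping matrices $U_1$ and $U_2$ lies inside $W$, which forces $S_1$ and $S_2$ to be block-diagonal with respect to the orthogonal splitting $\mathbb{R}^m = W \oplus W^\perp$. First I would collect the given basis into an $m \times p$ matrix $\Xi = [\xi_1, \ldots, \xi_p]$, so that $\Xi^\top \Xi = I_p$ and $\Xi\Xi^\top = P_W$ is the orthogonal projector onto $W$, and then extend it to a full orthonormal basis of $\mathbb{R}^m$ by appending an $m \times q$ matrix $\Xi_\perp$ whose columns span $W^\perp$, where $q = m - p$. By construction $Q = [\Xi, \Xi_\perp]$ is orthogonal, so conjugation by $Q$ leaves every determinant unchanged. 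The key observation is that each column of $U_1$ equals some $\boldsymbol{\phi}(v_j) \in W$ and each column of $U_2$ equals some $\boldsymbol{\phi}(v'_j) \in W$, and hence $\Xi_\perp^\top U_1 = 0$ and $\Xi_\perp^\top U_2 = 0$.

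Next I would read off the blocks of $Q^\top S_1 Q$. Writing $S_1 = U_1 L_1^{-1} U_1^\top + \gamma I$, the $(W,W)$ block is $\Xi^\top S_1 \Xi = \bar{S}_1$ by definition; the $(W^\perp, W^\perp)$ block is $\Xi_\perp^\top U_1 L_1^{-1} U_1^\top \Xi_\perp + \gamma I_q = \gamma I_q$, since $\Xi_\perp^\top U_1 = 0$; and the off-diagonal block is $\Xi^\top U_1 L_1^{-1} U_1^\top \Xi_\perp + \gamma\, \Xi^\top \Xi_\perp = 0$, using $\Xi_\perp^\top U_1 = 0$ together with $\Xi^\top \Xi_\perp = 0$. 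Thus $Q^\top S_1 Q = \mathrm{diag}(\bar{S}_1, \gamma I_q)$, and identically $Q^\top S_2 Q = \mathrm{diag}(\bar{S}_2, \gamma I_q)$.

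Finally I would push these block forms through the three determinants in \eqref{eq: FLG}. Block-diagonal matrices sharing this partition are closed under inversion and convex combination, so conjugating $\tfrac12 S_1^{-1} + \tfrac12 S_2^{-1}$ by $Q$ gives $\mathrm{diag}(\tfrac12 \bar{S}_1^{-1} + \tfrac12 \bar{S}_2^{-1},\, \gamma^{-1} I_q)$, whose inverse carries $\gamma I_q$ in the lower block. Using orthogonal invariance of the determinant and $\det(\mathrm{diag}(A,B)) = \det A \,\det B$, each determinant factors as its $W$-restriction times $\gamma^q$: namely $|S_i| = |\bar{S}_i|\,\gamma^q$ and $\bigl|(\tfrac12 S_1^{-1} + \tfrac12 S_2^{-1})^{-1}\bigr| = \bigl|(\tfrac12 \bar{S}_1^{-1} + \tfrac12 \bar{S}_2^{-1})^{-1}\bigr|\,\gamma^q$. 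Substituting into \eqref{eq: FLG}, the numerator contributes a factor $\gamma^{q/2}$ while the two quarter-power denominators contribute $\gamma^{q/4}\cdot\gamma^{q/4} = \gamma^{q/2}$, so the $\gamma$ powers cancel exactly and what remains is precisely \eqref{eq: subspace}.

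The only genuine content is the orthogonality observation $\Xi_\perp^\top U_1 = \Xi_\perp^\top U_2 = 0$, which collapses all of $W^\perp$ to the trivial isotropic block $\gamma I_q$; everything afterwards is routine determinant bookkeeping. The one point I would flag is invertibility: I would assume $\gamma > 0$, so that the block $\gamma I_q$ (and hence $S_1$ and $S_2$ themselves) is nonsingular and every inverse written above is legitimate.
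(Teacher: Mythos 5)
Your proof is correct and follows essentially the same route as the paper's: both hinge on the observation that the feature vectors (the columns of $U_1$ and $U_2$) lie in $W$, so that after an orthogonal change of basis adapted to $W \oplus W^\perp$ the matrices $S_1$ and $S_2$ split as $\wbar{S}_i \oplus \gamma I$, after which the $\gamma$ powers cancel in the determinant ratio. Your explicit flag that $\gamma > 0$ is needed for the inverses to exist is a worthwhile precision the paper leaves implicit.
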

\begin{proof}
The proposition hinges on the fact that \rf{eq: FLG} is invariant to rotation. 
In particular, if we extend \m{\cbr{\sseq{\xi}{p}}} to an orthonormal basis \m{\cbr{\sseq{\xi}{m}}} 
for the whole of \m{\RR^m}, let \m{O=[\sseq{\xi}{m}]} (the change of basis matrix) and set 
\m{\tilde S_1\<=O^\top\! S_1 O}, and \m{\tilde S_2\<=O^\top\! S_2 O}, then \rf{eq: FLG} can equivalently be 
written as \smallpull 
\begin{equation}\label{eq: subspace1}
\kflg(\Gcal_1,\Gcal_2)=
\fr{\absbig{\nts \brbig{\ovr{2}\tilde S_1^{-1}\<+\ovr{2} \tilde S_2^{-1}}^{-1}\nts }^{1/2}}
{\absN{\nts \tilde S_1\nts }^{1/4}\; \absN{\nts \tilde S_2\nts }^{1/4}}\:. \smallpull 
\end{equation}
However, in the \m{\cbr{\sseq{\xi}{m}}} basis \m{\tilde S_1} and \m{\tilde S_2} take on a special form. 
Writing \m{S_1} in the outer product form \smallpull 
\[S_1=\sum_{a,b=1}^{n_1}\V \phi(v_{a})\ts [L_1^{-1}]_{a,b}\ts \V \phi(v_{b})^\top\! +\gamma I \smallpull\]
and considering that for \m{i\<>p},\; \m{\inp{\V \phi(v_{a}),\xi_i}\<=0} shows 
that \m{\tilde S_1} splits into a direct sum \m{\tilde S_1=\wbar{S}_1\oplus \widehat{S}_1} of two matrices: 
a \m{p\<\times p} matrix \m{\wbar{S}_1} whose \m{(i,j)} entry is \smallpull 
\begin{equation}\label{eq: subspace2}
\xi_i^\top\! S_1\xi_j= 
\!\!\!\sum_{a,b=1}^{n_1}\!\! \inp{\xi_i,\V \phi(v_{1,a})} [L_1^{-1}]_{a,b} 
\inpN{\V \phi(v_{1,b}),\xi_j}+\gamma \delta_{i,j}, 
\end{equation}
where \m{\delta_{i,j}} is the Kronecker delta; 
and an \m{(n\<-p)\<\times(n\<-p)} dimensional matrix \m{\widehat{S}_1\<=\gamma I_{n-p}} 
(where \m{I_{n-p}} denotes the \m{n\<-p} dimensional identity matrix). 
Naturally, \m{\tilde S_2} decomposes into \m{\wbar{S}_2\<\oplus \widehat{S}_2} in an analogous way. 

Recall that for any pair of square matrices \m{M_1} and \m{M_2},\; 
\m{\absN{M_1\<\oplus M_2}=\absN{\nts M_1\nts}\cdot \absN{\nts M_2\nts}} and 
\m{(M_1\oplus M_2)^{-1}=M_1^{-1}\oplus M_2^{-1}}. 
Applying this to \rf{eq: subspace1} then gives \smallpull 
\begin{multline*}
\kflg(\Gcal_1,\Gcal_2)=
\fr{\absbig{\nts \brbig{\brbig{\ovr{2}\wbar S{}_1^{-1}\!\<+\ovr{2} \wbar {S}{}_2^{-1}}\oplus\gamma^{-1} I_{n-p}}^{-1}\nts }^{1/2}}
{\absbig{\nts \wbar S_1\<\oplus \gamma I_{n-k}\nts }^{1/4}\; \absbig{\nts \wbar S_2\<\oplus \gamma I_{n-k}\nts }^{1/4}}\;=\\ 
\fr{\absbig{\brbig{\ovr{2}\wbar S{}_1^{-1}\!\<+\ovr{2}\wbar {S}{}_2^{-1}}^{-1}\!\oplus\gamma I_{n-p}}^{1/2}}
{\absbig{\nts \wbar S_1\<\oplus \gamma I_{n-k}\nts }^{1/4}\; \absbig{\nts \wbar S_2\<\oplus \gamma I_{n-k}\nts }^{1/4}}\;=\\ 
\fr{\gamma^{(n-p)/2}}{\gamma^{(n-p)/4}\; \gamma^{(n-p)/4}}
\fr{\absbig{\brbig{\ovr{2}\wbar S{}_1^{-1}\!\<+\ovr{2}\wbar {S}{}_2^{-1}}^{-1}}^{1/2}}
{\absbig{\nts \wbar S_1\nts }^{1/4}\;\;\absbig{\nts \wbar S_2 \nts }^{1/4}}\:.\bigpull 
\end{multline*} 
\end{proof}

Similarly to kernel PCA \cite{MikSchSmoMuletal99} or the Bhattacharyya kernel, 
the easiest way to get a basis for \m{W} as required by \rf{eq: subspace} is to 
compute the eigendecomposition of the joint Gram matrix of the vertices of the two graphs. 

\begin{prop}\label{prop: Smx}
Let \m{\Gcal_1} and \m{\Gcal} be as in Proposition \ref{prop: subspace}, 
\m{\wbar V=\cbrN{\sseq{\wbar v}{n_1+n_2}}} be the union of their vertex sets  
(where it is assumed that the first \m{n_1} vertices are \m{\cbr{\sseq{v}{n_1}}} and the 
second \m{n_2} vertices are \m{\cbr{\sseq{v'}{n_2}}}),   
and define the joint Gram matrix \m{K\tin\RR^{(n_1+n_2)\times(n_1+n_2)}} as   
\smallpull
\[K_{i,j}=\kappa(\wbar v_i,\wbar v_j)=\V \phi(\wbar v_i)^\top \V \phi (\wbar v_j).\]
Let \m{\sseq{\V u}{p}} be (a maximal orthonormal set of) the non-zero eigenvalue eigenvectors of \m{K} 
with corresponding eigenvalues 
\m{\sseq{\lambda}{p}}. 
Then the vectors 
\begin{equation}\label{eq: xi}
\xi_i=\ovr{\sqrt{\lambda_i}} \sum_{\ell=1}^{n_1+n_2} [\V u_i]_\ell\: \V \phi(\wbar v_\ell)
\end{equation}
form an orthonormal basis for \m{W}. 
Moreover, defining \m{Q=[\lambda_1^{1/2}\V u_1, \ldots, \lambda_p^{1/2} \V u_p]\tin\RR^{p\times p}} 
and setting \m{Q_1=Q_{1:n_1,\,:}} and \m{Q_2=Q_{n_1+1:n_2,\,:}} (the first \m{n_1} and remaining \m{n_2} 
rows of \m{Q}, respectively), the matrices \m{\wbar S_1} and \m{\wbar S_2} appearing in \rf{eq: subspace} 
can be computed as 
\begin{equation}\label{eq: Smat}
\wbar S_1=Q_1^\top L_1^{-1} Q_1 +\gamma I, \quad~~ \wbar S_2=Q_2^\top L_2^{-1} Q_2 +\gamma I.
\end{equation}
\end{prop}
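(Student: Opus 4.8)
The plan is to verify the two claims in order: first that the vectors $\xi_i$ defined in \rf{eq: xi} form an orthonormal basis for $W$, and then that plugging this basis into the formula $[\wbar S_1]_{i,j}=\xi_i^\top S_1 \xi_j$ from Proposition \ref{prop: subspace} reproduces the compact expression \rf{eq: Smat}.

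For the first claim, I would proceed exactly as in the standard kernel-PCA argument. Each $\xi_i$ is by construction a linear combination of the $\V\phi(\wbar v_\ell)$, so $\xi_i\in W$; conversely, since the $\V u_i$ are the nonzero-eigenvalue eigenvectors of the Gram matrix $K$, their span (pulled back through the feature map) covers all of $W$, so there are exactly $p=\dim W$ of them. The key computation is orthonormality: I would expand $\inp{\xi_i,\xi_j}=\tfrac{1}{\sqrt{\lambda_i\lambda_j}}\sum_{\ell,\ell'}[\V u_i]_\ell [\V u_j]_{\ell'}\,\V\phi(\wbar v_\ell)^\top\V\phi(\wbar v_{\ell'})$ and recognize the inner double sum as $\V u_i^\top K \V u_j$. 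Since $K\V u_j=\lambda_j\V u_j$ and the $\V u$'s are orthonormal eigenvectors, this collapses to $\tfrac{1}{\sqrt{\lambda_i\lambda_j}}\cdot\lambda_j\,\delta_{i,j}=\delta_{i,j}$, giving orthonormality.

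For the second claim, I would start from $[\wbar S_1]_{i,j}=\xi_i^\top S_1 \xi_j$ and substitute the outer-product form $S_1=\sum_{a,b=1}^{n_1}\V\phi(v_a)[L_1^{-1}]_{a,b}\V\phi(v_b)^\top+\gamma I$ already used in the proof of Proposition \ref{prop: subspace}. The $\gamma I$ term contributes $\gamma\,\xi_i^\top\xi_j=\gamma\,\delta_{i,j}$ by the orthonormality just established, accounting for the $+\gamma I$ in \rf{eq: Smat}. For the main term I would substitute \rf{eq: xi} for both $\xi_i$ and $\xi_j$ and collect the scalars $\inp{\xi_i,\V\phi(v_a)}$. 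The crucial bookkeeping step is to show that $\inp{\xi_i,\V\phi(v_a)}=\sqrt{\lambda_i}\,[\V u_i]_a$, i.e., that this equals exactly the $(a,i)$ entry of the matrix $Q$; this follows again from $\inp{\xi_i,\V\phi(\wbar v_a)}=\tfrac{1}{\sqrt{\lambda_i}}\sum_\ell [\V u_i]_\ell K_{\ell,a}=\tfrac{1}{\sqrt{\lambda_i}}(K\V u_i)_a=\sqrt{\lambda_i}[\V u_i]_a$. Since $v_a$ (for $a\le n_1$) is the $a$th vertex of the combined list $\wbar V$, this entry is precisely $[Q_1]_{a,i}$. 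Assembling the double sum over $a,b$ then yields $[\wbar S_1]_{i,j}=\sum_{a,b}[Q_1]_{a,i}[L_1^{-1}]_{a,b}[Q_1]_{b,j}+\gamma\delta_{i,j}$, which is the matrix product $(Q_1^\top L_1^{-1}Q_1+\gamma I)_{i,j}$, as required. The argument for $\wbar S_2$ is identical, using that the relevant vertices occupy rows $n_1+1,\ldots,n_1+n_2$ of $Q$, hence correspond to $Q_2$.

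The main obstacle is purely the index bookkeeping in the second part: one must track carefully that the inner product $\inp{\xi_i,\V\phi(\wbar v_\ell)}$ reduces through the eigenvector relation to the clean expression $\sqrt{\lambda_i}[\V u_i]_\ell$, and that restricting the summation index $a$ to the first $n_1$ vertices correctly selects the submatrix $Q_1$ rather than the full $Q$. Nothing here is conceptually deep once the identity $\inp{\xi_i,\V\phi(\wbar v_\ell)}=\sqrt{\lambda_i}[\V u_i]_\ell$ is in hand, so the bulk of the proof is the verification of that single relation and its careful substitution.
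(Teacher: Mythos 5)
Your proposal is correct and follows essentially the same route as the paper: orthonormality via $\V u_i^\top K \V u_j$ and the eigenvector relation, $p=\mathrm{rank}(K)=\dim W$ for the basis claim, and substitution of \rf{eq: xi} into $\xi_i^\top S_1 \xi_j$ using $K\V u_i=\lambda_i \V u_i$ for the second part. The only cosmetic difference is that you isolate the scalar identity $\inp{\xi_i,\V\phi(\wbar v_a)}=\sqrt{\lambda_i}[\V u_i]_a=[Q]_{a,i}$ before assembling the double sum, whereas the paper collapses the whole expression to $(\lambda_i\lambda_j)^{-1/2}\,\V u_i^\top K L^{-1} K\, \V u_j$ in one step; both rest on the same eigenvector relation.
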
 

\begin{proof}
For \m{i\neq j}, \bigpull 
\begin{multline*}
\xi_i^\top \xi_j=
\ovr{\sqrt{\lambda_i\lambda_j}} \sum_{k=1}^{n_1+n_2}\sum_{\ell=1}^{n_1+n_2} [\V u_i]_k 
\underbrace{\V\phi(\wbar v_k)^\top\! \V \phi(\wbar v_\ell)}_{\kappa(\wbar v_k,\wbar v_\ell)}[\V u_j]_\ell=\\
(\lambda_i \lambda_j )^{-1/2}\,\V u_i^\top K \V u_j=(\lambda_j/\lambda_i)^{1/2} \V u_i^\top \V u_j=0, 
\end{multline*}\vspace{-12pt}\\  
while for \m{i\<=j}, 
\begin{mequation}
\xi_i^\top \xi_j=\lambda_i^{-1}\,\V u_i^\top K \V u_i=\V u_i^\top \V u_i=1,
\end{mequation}
showing that \m{\cbrN{\sseq{\xi}{p}}} is an orthonormal set.  
At the same time, \m{p\<=\textrm{rank}(K)\<=\dim(W)} and \m{\sseq{\xi}{p}\tin W}, proving that 
\m{\cbrN{\sseq{\xi}{p}}} is an orthonormal basis for \m{W}.  

To derive the form of \m{\wbar S_1}, simply plug \rf{eq: xi} into \rf{eq: subspace2}:\smallpull  
\begin{multline*}
\xi_i^\top\!  S_1 \xi_j=
\ovr{\sqrt{\lambda_i\lambda_j}} \sum_{k=1}^{n_1}\sum_{\ell=1}^{n_1} \sum_{a,b=1}^{n}[\V u_i]_k 
\underbrace{\V\phi(\wbar v_k)^\top\! \V \phi(\wbar v_a)}_{\kappa(\wbar v_k,\wbar v_a)}\cdot \\ 
\cdot  [L_1^{-1}]_{a,b} 
\underbrace{\V\phi(\wbar v_b)^\top\! \V \phi(\wbar v_\ell)}_{\kappa(\wbar v_b,\wbar v_\ell)} [\V u_j]_\ell+ 
\gamma \delta_{i,j}= \\ 
(\lambda_i \lambda_j)^{-1/2}\ts \V u_i^\top\nts K L^{-1} K \V u_j+\gamma \delta_{i,j}=\\
(\lambda_i \lambda_j)^{1/2}\ts \V u_i^\top\nts L^{-1} \V u_j+\gamma\delta_{i,j},  
\end{multline*}
and similarly for \m{\wbar S_2}. 
\end{proof}

As in other kernel methods, the significance of Propositions \ref{prop: subspace} and \ref{prop: Smx} 
is not just that they show show how  
\m{\kflg(\Gcal_1,\Gcal_2)} can be efficiently computed when \m{\V \phi} is very high dimensional, 
but that they also make it clear that the FLG kernel can really be induced from any base kernel,  
regardless of whether it corresponds to actual finite dimensional feature vectors or not. 
For completeness, we close this section with this generalized definition of the FLG kernel.  

\begin{defn}\label{def: gFLG}
Let \m{\Gcal_1} and \m{\Gcal_2} be two graphs. Assume that each of their vertices comes from an 
abstract vertex space \m{\Vcal} and that \m{\kappa\colon \Vcal \times\Vcal\to\RR} 
is a symmetric positive semi-definite kernel on \m{\Vcal}. 
The \df{generalized FLG kernel} induced from \m{\kappa} is then defined as \smallpull  
\begin{equation}\label{eq: gFLG}
\kflg^\kappa(\Gcal_1,\Gcal_2)=\fr{\absbig{\nts \brbig{\ovr{2}\wbar{S}_1^{-1}\<+\ovr{2} \wbar{S}_2^{-1}}^{-1}\nts }^{1/2}}
{\absN{\nts \wbar{S}_1\nts }^{1/4}\; \absN{\nts \wbar{S}_2\nts }^{1/4}}\;, 
\vspace{-6pt}\end{equation}
where \m{\wbar{S}_1} and \m{\wbar S_2} are as defined in Proposition \ref{prop: Smx}. 
\end{defn}

\ignore{
In particular, for a pair of graphs \m{(\Gcal_1,\Gcal_2)} with vertex sets 
\m{V_1=\cbrN{v_1\ldots v_{n_1}}} and 
\m{V_2=\cbrN{v'_1\ldots v'_{n_2}}}, we define the joint Gram matrix 
between their vertices as  
\[K_{i,j}=
\begin{cases} 
\kappa(v_i,v_j)&\Tif~~ i\<\leq n_1,~~j\<\leq n_1\\  
\kappa(v_i,v'_{j-n_1})&\Tif~~ i\<\leq n_1,~~j\<>n_1\\  
\kappa(v'_{i-n_1},v_{j})&\Tif~~ i\<>n_1,~~j\<\leq n_1\\  
\kappa(v'_{i-n_1},v'_{j-n_1})&\Tif~~ i\<>n_1,~j\<>n_1.  
\end{cases}\]
We let \m{K\<=Q^\top D Q} be the eigendecomposition of this matrix (so that the rows of \m{Q} are 
the eigenvectors) and define \m{Q_1\<=Q_{:,1:n_1}} and \m{Q_2\<=Q_{:,n_1+1:n_1+n_2}} 
(the first \m{n_1} columns of \m{Q} and the remaining \m{n_2} columns, respectively). 
We then have the following theorem. 

\begin{thm}\label{thm: flg compute}
Let \m{\Gcal_1} and \m{\Gcal_2} be two graphs and \m{\kappa} the base kernel between their vertices. 
Then, the FLG kernel \rf{eq: FLG} kernel can be equivalently written as 
\begin{equation*}
k(\Gcal_1,\Gcal_2)=\fr{\absbig{\nts \brbig{\ovr{2}\wbar{S}_1^{-1}\<+\ovr{2} \wbar{S}_2^{-1}}^{-1}\nts }^{1/2}}
{\absN{\nts \wbar{S}_1\nts }^{1/4}\; \absN{\nts \wbar{S}_2\nts }^{1/4}}\;, 
\end{equation*}
where \m{\wbar{S}_1\<=Q_1 L_1^{-1} Q_1^\top \nts \<+\gamma I}\: and\: 
\m{\wbar{S}_2\<=Q_2 L_2^{-1} Q_2^\top \nts \<+\gamma I}. 
\end{thm}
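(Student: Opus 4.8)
The plan is to treat this as the standard kernel-PCA duality argument: the non-zero eigenvalue eigenvectors \m{\V u_i} of the joint Gram matrix \m{K} encode, through \rf{eq: xi}, the coordinates of an orthonormal basis of \m{W} expressed in the (possibly very high dimensional) feature vectors \m{\V\phi(\wbar v_\ell)}. I would prove the two assertions in turn: first that the \m{\xi_i} are orthonormal and span \m{W}, and then, by substituting \rf{eq: xi} into the entrywise definition \m{[\wbar S_1]_{i,j}=\xi_i^\top S_1\xi_j} supplied by Proposition \ref{prop: subspace}, that \m{\wbar S_1=Q_1^\top L_1^{-1}Q_1+\gamma I} (and likewise for \m{\wbar S_2}).

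For orthonormality I would expand \m{\xi_i^\top\xi_j} into a double sum over the vertices and immediately recognise each inner product \m{\V\phi(\wbar v_k)^\top\V\phi(\wbar v_\ell)} as the Gram entry \m{K_{k,\ell}}, so that \m{\xi_i^\top\xi_j=(\lambda_i\lambda_j)^{-1/2}\,\V u_i^\top K\V u_j}. The eigenvalue relation \m{K\V u_j=\lambda_j\V u_j} together with orthonormality of the \m{\V u_i} collapses this to \m{(\lambda_j/\lambda_i)^{1/2}\delta_{i,j}=\delta_{i,j}}. That the \m{\xi_i} form a basis of \m{W} then follows from a dimension count: each \m{\xi_i} lies in \m{W} by construction, and since the rank of a Gram matrix equals the dimension of the span of the vectors it is built from, \m{p=\textrm{rank}(K)=\dim W}; \m{p} orthonormal vectors in a \m{p}-dimensional space are automatically a basis.

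The substantive part is the matrix formula. Here I would insert \rf{eq: xi} into \m{\xi_i^\top S_1\xi_j}, using the outer-product form \m{S_1=\sum_{a,b=1}^{n_1}\V\phi(v_a)[L_1^{-1}]_{a,b}\V\phi(v_b)^\top+\gamma I}. The key observation is that the first \m{n_1} vertices of \m{\wbar V} are exactly the vertices of \m{\Gcal_1}, so every feature inner product that appears is again an entry of the \emph{same} joint Gram matrix \m{K}. Consequently the inner contraction \m{\sum_{k}[\V u_i]_k\,\kappa(\wbar v_k,\wbar v_a)} is just the \m{a}-th component of \m{K\V u_i}, namely \m{\lambda_i[\V u_i]_a}; applying this on both the \m{i} and the \m{j} side produces a \m{KL^{-1}K}-type sandwich that the eigenvalue relation reduces to \m{(\lambda_i\lambda_j)^{1/2}\sum_{a,b=1}^{n_1}[\V u_i]_a[L_1^{-1}]_{a,b}[\V u_j]_b}. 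Recognising \m{\lambda_i^{1/2}[\V u_i]_a=[Q_1]_{a,i}} for \m{a\leq n_1} identifies this with \m{[Q_1^\top L_1^{-1}Q_1]_{i,j}}, while the stray \m{\gamma I} contributes \m{\gamma\delta_{i,j}}; the computation for \m{\wbar S_2} is identical with the lower \m{n_2} rows \m{Q_2}.

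I expect the main obstacle to be bookkeeping around the two different index ranges: \m{\xi_i} is a combination of all \m{n_1+n_2} feature vectors, whereas \m{S_1} is assembled only from \m{\Gcal_1}'s \m{n_1} feature vectors and \m{L_1^{-1}} couples only those indices. The argument goes through precisely because the identity \m{\sum_{k=1}^{n_1+n_2}[\V u_i]_k K_{k,a}=\lambda_i[\V u_i]_a} holds for every index \m{a}, in particular on the restricted range \m{a\leq n_1}, and the restriction of \m{\lambda_i^{1/2}\V u_i} to those first \m{n_1} coordinates is exactly the block \m{Q_1}. Keeping the full-range contractions separate from the restricted \m{L_1^{-1}} sum, and not inadvertently extending \m{L_1^{-1}} to the \m{\Gcal_2} vertices, is where care is needed; everything else is routine linear algebra.
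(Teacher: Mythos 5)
Your proposal is correct and follows essentially the same route as the paper, which establishes this result as the combination of Propositions \ref{prop: subspace} and \ref{prop: Smx}: the restriction to \m{W} via rotation invariance and the direct-sum splitting of \m{\tilde S_1}, followed by exactly the Gram-matrix eigendecomposition computation you describe (orthonormality via \m{\V u_i^\top K\V u_j}, the rank-of-Gram-matrix dimension count, and the \m{KL^{-1}K} sandwich collapsing to \m{(\lambda_i\lambda_j)^{1/2}\V u_i^\top L^{-1}\V u_j+\gamma\delta_{i,j}}). The only cosmetic discrepancy is the transpose convention on \m{Q_1} (rows versus columns as eigenvectors), which you resolve consistently with Proposition \ref{prop: Smx}.
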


\begin{proof}
The theorem hinges on the fact that the FLG kernel is rotation invariant. This means that if 
\m{\cbr{\sseq{\xi}{m}}} is any orthonormal basis of \m{\RR^m} and \m{O} be is the corresponding  
change of basis matrix \m{O=[\sseq{\xi}{m}]}, then \rf{eq: FLG} remains invariant under  
changing \m{S_1} to \m{\tilde S_1=O^\top\! S_1 O}, and 
\m{S_2} to \m{\tilde S_2=O^\top\! S_2 O}. 

The specific orthonormal basis that we will use is adapted to the subspace \m{W} 
spanned by the feature vectors \m{\cbrN{\V\phi(v_1)\ldots\V\phi(v_{n_1}),\V\phi(v'_{1})\ldots\V\phi(v'_{n_2})}}:  
the first \m{p} basis vectors \m{\sseq{\xi}{p}} form a basis for \m{W}, 
while \m{\xi_{p+1},\ldots,\xi_m} form a basis for the complementary subspace \m{W^\perp}.  
By definition, \m{S_1} can be written in the outer product form 
\[S_1=\sum_{a,b=1}^{n_1} \V \phi(v_a) [L_1^{-1}]_{a,b} \V \phi(v_b)^\top +\gamma I.\]
However, for \m{i\<>p},\; \m{\inp{\V \phi(v_a),\xi_i}\<=0}. Therefore, \m{\tilde S_1} can be written as a direct sum 
\m{\tilde S_1=\wbar{S}_1\oplus \widehat{S}_1} of two smaller matrices: 
the \m{p\<\times p} matrix \m{\wbar{S}_1} with entries 
\[[\wbar{S}_1]_{i,j}=\sum_{a,b=1}^{n_1} \inp{\xi_i,\V \phi(v_a)} [L_1^{-1}]_{a,b} 
\inpN{\phi(v_b),\xi_j}+\gamma I_p,\]
and the \m{(n\<-p)\<\times(n\<-p)} dimensional matrix \m{\widehat{S}_1\<=\gamma I_{n-p}}. 
In both of these expressions \m{I_r} denotes the \m{r} dimensional identity matrix. 
Naturally, \m{\tilde S_2} decomposes into \m{\wbar{S}_2\<\oplus \widehat{S}_2} in an analogous way. 

Next, recall that for any pair of square matrices \m{M_1} and \m{M_2}, we have that 
\m{\absN{M_1\<\oplus M_2}=\absN{\nts M_1\nts}\cdot \absN{\nts M_2\nts}}. 
Furthermore, \m{(M_1\oplus M_2)^{-1}=M_1^{-1}\oplus M_2^{-1}}. 
In our particular case, 
\begin{multline*}
k(\Gcal_1,\Gcal_2)=
\fr{\absbig{\nts \brbig{\ovr{2}\tilde S_1^{-1}\<+\ovr{2} \tilde S_2^{-1}}^{-1}\nts }^{1/2}}
{\absbig{\nts \tilde S_1\nts }^{1/4}\; \absbig{\nts \tilde S_2\nts }^{1/4}}\;=\\ 
\fr{\absbig{\nts \brbig{\brbig{\ovr{2}\wbar S{}_1^{-1}\!\<+\ovr{2} \wbar {S}{}_2^{-1}}\oplus\gamma^{-1} I_{n-p}}^{-1}\nts }^{1/2}}
{\absbig{\nts \wbar S_1\<\oplus \gamma I_{n-k}\nts }^{1/4}\; \absbig{\nts \wbar S_2\<\oplus \gamma I_{n-k}\nts }^{1/4}}\;=\\ 
\fr{\absbig{\brbig{\ovr{2}\wbar S{}_1^{-1}\!\<+\ovr{2}\wbar {S}{}_2^{-1}}^{-1}\!\oplus\gamma I_{n-p}}^{1/2}}
{\absbig{\nts \wbar S_1\<\oplus \gamma I_{n-k}\nts }^{1/4}\; \absbig{\nts \wbar S_2\<\oplus \gamma I_{n-k}\nts }^{1/4}}\;=\\ 
\fr{\gamma^{(n-p)/2}\; \absbig{\brbig{\ovr{2}\wbar S{}_1^{-1}\!\<+\ovr{2}\wbar {S}{}_2^{-1}}^{-1}}^{1/2}}
{\gamma^{(n-p)/4}\; \absbig{\nts \wbar S_1\nts }^{1/4}\; \gamma^{(n-p)/2}\; \absbig{\nts \wbar S_2 \nts }^{1/4}}\;=\\ 
\fr{\absbig{\brbig{\ovr{2}\wbar S{}_1^{-1}\!\<+\ovr{2}\wbar {S}{}_2^{-1}}^{-1}}^{1/2}}
{\absbig{\nts \wbar S_1\nts }^{1/4}\; \absbig{\nts \wbar S_2 \nts }^{1/4}}.
\end{multline*}

\end{proof}
Theorem \rf{thm: flg compute} is analogous to similar results for kernel PCA \cite{MikSchSmoMuletal99} 
and the Bhattacharyya kernel in that it shows that while \m{m} might be very large, the FLG kernel 
can be computed by only considering an at most \m{n_1\<+n_2} dimensional subspace of \m{\RR^{m}}, 
spanned by \m{\cbrN{\V\phi(v_1)\ldots\V\phi(v_{n_1}),\V\phi(v'_{1})\ldots\V\phi(v'_{n_2})}}. 
The eigendecomposition of \m{K} is only used as a way to construct an orthonormal basis for this subspace. 
In fact, when computational expense is a concern, the kernel can be approximated by truncating the 
eigendecomposition to \m{p<n_1+n_2} eigenvectors. 

At a more conceptual level, Theorem \ref{thm: flg compute} shows that, while we originally defined the FLG 
kernel in terms of \m{m} explicit vertex features, it is really a way of lifting a kernel between vertices 
to a kernel between graphs. 
When \m{\kappa} is a general positive semidefinite kernel, formally, \m{S_1} and \m{S_2} become operators 
on the corresponding Reproducing Kernel Hilbert Space, making the definition of the FLG kernel more 
technical. Thanks to Theorem \ref{thm: flg compute}, however, 
the actual kernel value between any two finite graphs can still be computed using finite dimensional 
linear algebra. 
}




\ignore{
Assuming \m{\Cov(\x,\x)\<=L^{-1}} and a linear feature mapping \m{\y\<=U \x},~  
\m{\Cov(\y,\y)\<=U L^{-1} U^\top}. 
Therefore, the distribution induced by \rf{eq: gm} over \m{\y} is a multivariate Gaussian with covariance 
\m{U L^{-1}U^\top}:  
\begin{equation}\label{eq: gmy}
p(\y)\propto e^{-\y^\top\!\nts  (U L^{-1}U^\top)^{-1} \ts \y/2}. 
\end{equation}
This motivates the following defintion. 
}

\ignore{
\begin{defn}
Let \m{\Gcal} be a weighted graph with vertex set \m{V\<=\cbr{1,\ldots,n}}, \m{A} be its adjacency matrix, 
and \m{B_r(j)} the neighborhood of radius \m{r} in \m{\Gcal} centered on vertex \m{j}. 
For any ordered subset \m{W} of \m{V}, let \m{[A]_{W,W}\tin\RR^{|W|\times|W|}} 
denote the submatrix of \m{A} cut out by the rows/columns indexed by elements of \m{W}. 
We say that a function \m{\phi^{\Gcal}\colon V\to\RR} is an \m{r}\df{--local invariant vertex feature} if 
\begin{compactenum}[(a)]
\item \m{\phi(j)} can be computed from just the topology of the \m{r}--neighborhood around \m{j}, i.e., 
\[\phi^\Gcal(j)=\phi([A]_{B_r(j),B_r(j)}),\]
for some appropriate matrix function \m{\phi};
\item for any permutation \m{\sigma\colon V\to V} and corresponding permutation matrix \m{P_\sigma}, 
\[\phi([P_\sigma A P_\sigma^\top]_{B_r(\sigma(j)),B_r(\sigma(j))})=\phi([A]_{B_r(j),B_r(j)}).\] 
\end{compactenum}
\end{defn}
\bigskip
}
\ignore{
\begin{prop}
Given \m{\Gcal_1} and \m{\Gcal} as in Proposition \ref{prop: subspace}, and a base kernel 
\m{\kappa(v,v')\<=\V \phi(v)\cdot \V \phi(v')} between their vertices, 
define the joint Gram matrix \m{K\tin\RR^{(n_1+n_2)\times(n_1+n_2)}}  
\smallpull
\[K_{i,j}=
\begin{cases} 
\kappa(v^{(1)}_i,v^{(1)}_j)&\Tif~~ i\<\leq n_1,~~j\<\leq n_1\\  
\kappa(v^{(1)}_i,v^{(2)}_{j-n_1})&\Tif~~ i\<\leq n_1,~~j\<>n_1\\  
\kappa(v^{(2)}_{i-n_1},v^{(1)}_{j})&\Tif~~ i\<>n_1,~~j\<\leq n_1\\  
\kappa(v^{(2)}_{i-n_1},v^{(2)}_{j-n_1})&\Tif~~ i\<>n_1,~j\<>n_1,  
\end{cases}\]
and let \m{Q D Q^\top } be the eigendecomposition of \m{K}. 
Then the vectors 
\[\xi_i=\sum_{j_1=1}^{n_1} Q_{j_1,i}\, \V \phi(v_{j_1})+\sum_{j_2=1}^{n_2} Q_{j_2,i}\, \V \phi(v'_{j_2})\quad i=1,\ldots,p\]
form an orthonormal basis for W.  
\end{prop}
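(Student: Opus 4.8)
The plan is to prove the two assertions in turn: first that \m{\cbrN{\sseq{\xi}{p}}} is an orthonormal basis for \m{W}, and then that the projected matrices \m{\wbar S_1} and \m{\wbar S_2} are given by \rf{eq: Smat}. The single observation that makes everything go through is that any dot product \m{\V\phi(\wbar v_k)^\top \V\phi(\wbar v_\ell)} between feature vectors is, by definition, an entry \m{K_{k,\ell}} of the joint Gram matrix, so sums taken against the eigenvectors \m{\V u_i} of \m{K} can be collapsed using the eigenvalue equation \m{K\V u_i=\lambda_i\V u_i}.

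For the first assertion I would start from \rf{eq: xi} and compute \m{\xi_i^\top \xi_j} directly. Expanding the two defining sums and replacing each \m{\V\phi(\wbar v_k)^\top \V\phi(\wbar v_\ell)} by \m{K_{k,\ell}} turns the double sum into \m{(\lambda_i\lambda_j)^{-1/2}\,\V u_i^\top K \V u_j}; applying \m{K\V u_j=\lambda_j\V u_j} together with orthonormality of the \m{\V u_i} leaves \m{(\lambda_j/\lambda_i)^{1/2}\delta_{i,j}}, which equals \m{1} when \m{i=j} and \m{0} otherwise, establishing orthonormality. To upgrade this to a basis of \m{W}, I would note that each \m{\xi_i} is a linear combination of the feature vectors and hence lies in \m{W}, while \m{p=\textrm{rank}(K)=\dim(W)}, because \m{K} is exactly the Gram matrix of \m{\V\phi(\wbar v_1),\ldots,\V\phi(\wbar v_{n_1+n_2})} and its rank equals the dimension of their span. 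A set of \m{p} orthonormal vectors inside a \m{p}-dimensional subspace automatically spans it.

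For the second assertion the strategy is to substitute \rf{eq: xi} into the entrywise formula \rf{eq: subspace2} for \m{[\wbar S_1]_{i,j}=\xi_i^\top S_1\xi_j}, using the outer-product form of \m{S_1} from the proof of Proposition \ref{prop: subspace}. The same collapse now applies on both sides: the inner sum \m{\sum_k [\V u_i]_k K_{k,a}} equals \m{(\V u_i^\top K)_a=\lambda_i[\V u_i]_a}, and symmetrically for the other factor, so the two eigenvalue factors \m{\lambda_i,\lambda_j} combine with the prefactor \m{(\lambda_i\lambda_j)^{-1/2}} to leave \m{\sqrt{\lambda_i\lambda_j}}. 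Since \m{\sqrt{\lambda_i}\,[\V u_i]_a} is precisely the \m{(a,i)} entry of \m{Q}, and the summation indices \m{a,b} of \m{S_1} range only over the \m{n_1} vertices of \m{\Gcal_1}, these entries are those of \m{Q_1}, while the residual \m{\gamma I} contributes \m{\gamma\delta_{i,j}} by orthonormality. Collecting terms yields \m{[\wbar S_1]_{i,j}=[Q_1^\top L_1^{-1} Q_1]_{i,j}+\gamma\delta_{i,j}}, which is the first identity in \rf{eq: Smat}, and \m{\wbar S_2} follows identically with \m{Q_2} and \m{L_2}.

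I do not expect a genuine obstacle here: the argument is essentially bookkeeping built on the single Gram-matrix identity. The one place demanding care, and the main thing I would watch, is the index range: \m{S_1} is assembled only from the \m{n_1} feature vectors of \m{\Gcal_1}, so the sums defining it run over the first block of \m{\wbar V}, and it is exactly this restriction that selects \m{Q_1} (the first \m{n_1} rows of \m{Q}) rather than all of \m{Q}. Keeping the \m{\lambda_i^{-1/2}} normalization of \m{\xi_i} consistent with the \m{\lambda_i^{1/2}} factor built into \m{Q} is the only other spot where a stray power of \m{\lambda_i} could slip in.
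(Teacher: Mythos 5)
Your proposal is correct and follows essentially the same route as the paper's proof of Proposition \ref{prop: Smx}: orthonormality by collapsing the Gram-matrix double sum to \m{(\lambda_i\lambda_j)^{-1/2}\,\V u_i^\top K \V u_j}, the basis claim via \m{p=\textrm{rank}(K)=\dim(W)}, and the formulas for \m{\wbar S_1,\wbar S_2} by substituting \rf{eq: xi} into \rf{eq: subspace2}. If anything, your bookkeeping of the index ranges (the \m{k,\ell} sums running over all \m{n_1+n_2} vertices while \m{a,b} stay in the first block, which is exactly what selects \m{Q_1}) is stated more carefully than in the paper's displayed computation.
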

}

\section{Multiscale Laplacian Graph Kernels}\label{sec: multiscale}


By a multiscale graph kernel we mean a kernel that is able to capture similarity between graphs not just based on 
the topological relationships between their individual vertices, but also the topological relationships 
between subgraphs. 
The key property of the FLG kernel that allows us to build such a kernel is that 
it can be applied recursively. In broad terms, the construction goes as follows: 
\begin{compactenum}[1.]
\item Given a graph \m{\Gcal}, divide it into a large number of small (typically overlapping) subgraphs 
and compute the FLG kernel between any two subgraphs. 
\item Each subgraph is attached to some vertex of \m{\Gcal} (for example, its center), so 
we can reinterpret the FLG kernel as a new base kernel between the vertices. 
\item We now divide \m{\Gcal} into larger subgraphs, compute the new FLG kernel between them induced from the 
new base kernel, and recurse \m{L} times. 
\end{compactenum}
Finally, to compute the actual kernel between two graphs \m{\Gcal} and \m{\Gcal'}, we follow the same process 
for \m{\Gcal'} and then compute \m{\kflg(\Gcal,\Gcal')} induced from their top level base kernels. 
The following definitions formalize this construction.   

\begin{defn}\label{def: mls}
Let \m{\Gcal} be a graph with vertex set \m{V}, and \m{\kappa} a positive semi-definite kernel on \m{V}. 
Assume that for each \m{v\tin V} we have a nested sequence of \m{L} neighborhoods
\begin{equation}\label{eq: neighborhoods}
v\tin N_1(v)\subseteq N_2(v)\subseteq \ldots \subseteq N_L(v)\subseteq V, 
\end{equation}
and for each \m{N_\ell(v)}, let \m{G_\ell(v)} be the corresponding induced subgraph of \m{\Gcal}. 
We define the   
\df{Multiscale Laplacian Subgraph Kernels (MLS kernels)}, \m{\kmls{1},\ldots,\kmls{L}\colon V\times V\to\RR} as follows:
\begin{compactenum}[1.]
\item $\mathfrak{K}_1$ is just the FLG kernel \m{\kflg^{\kappa}}
induced from the base kernel \m{\kappa} between the lowest level subgraphs: 
\[\kmls{1}(v,v')=\kflg^{\kappa}(G_1(v),G_1(v')).\]
\item For \m{\ell=2,3,\ldots,L}, the MLS kernel \m{\kmls{\ell}} is the FLG kernel induced 
from \m{\kmls{\ell-1}} between \m{G_\ell(v)} and \m{G_\ell(v')}: 
\[\kmls{\ell}(v,v')=\kflg^{\kmls{\ell-1}}(G_\ell(v),G_\ell(v')).\]
\end{compactenum}
\end{defn}

Definition \ref{def: mls} defines the MLS kernel as a kernel between different subgraphs of the same graph \m{\Gcal}. 
However, if two graphs \m{\Gcal_1} and \m{\Gcal_2} share the same base kernel, 
the MLS kernel can also be used to compare any subgraph of \m{\Gcal_1} with any subgraph of \m{\Gcal_2}. 
This is what allows us to define an \m{L\<+1}'th FLG kernel, which compares the two full graphs. 

\begin{defn}\label{def: mlg}
Let \m{\mathfrak{G}} be a collection of graphs such that all their vertices are members of an abstract vertex space 
\m{\Vcal} endowed with a symmetric positive semi-definite kernel 
\m{\kappa\colon \Vcal \times\Vcal\to\RR}. 
Assume that the MLS kernels \m{\kmls{1},\ldots,\kmls{L}} are defined as in Definition \ref{def: mls}, 
both for pairs of subgraphs within the same graph and across pairs of different graphs. 
We define the \df{Multiscale Laplacian Graph Kernel (MLG kernel)} between any two graphs 
\m{\Gcal_1,\Gcal_2\tin\mathfrak{G}} as 
\[\kmlg(\Gcal_1,\Gcal_2)=\kflg^{\kmls{L}}(\Gcal_1,\Gcal_2).\]
\end{defn}

Definition \ref{def: mlg} leaves open the question of how the neighborhoods \m{N_1(v),\ldots,N_L(v)} are 
to be defined. In the simplest case, we set \m{N_\ell(v)} to be the ball \m{B_r(v)} 
(i.e., the set of vertices at a distance at most \m{r} from \m{v}), where \m{r=r_0 \eta^{\ell-1}} 
for some \m{\eta\<>1}. The \m{\eta\<=2} case is particularly easy, 
because we can then construct the neighborhoods as follows: 
\begin{compactenum}[1.]
\item For \m{\ell\<=1}, find each \m{N_1(v)=B_{r_0}(v)} separately. 
\item For \m{\ell\<=2,3,\ldots,L}, for each \m{v\tin\Gcal} set 
\[N_\ell(v)=\bigcup_{w\in N_{\ell-1}(v)} N_{\ell-1}(w).\]
\end{compactenum}
\ignore{
Assuming that the maximal degree of any node in \m{\Gcal} is \m{D}, the first 
step can be accomplished in \m{O(n D^{r_0})} time. 
Each subsequent step involves taking \m{n} separate unions of \m{O(n)} sets of size at most \m{O(n)}.  
Assuming that \m{D^{r_0}<n^2}, this results in a total complexity of \m{O(n^3 L)}. 
}

\subsection{Computational complexity and caching} 

Definitions \ref{def: mls} and \ref{def: mlg} suggest a recurisve approach to computing the MLG kernel: 
computing \m{\kmlg(\Gcal_1,\Gcal_2)} first requires computing \m{\kmls{L}(v,v')} between all 
\m{\binom{n_1+n_2}{2}} pairs of top level subgraphs across \m{\Gcal_1} and \m{\Gcal_2}; 
each of these kernel evaluations requires computing \m{\kmls{L-1}(v,v')} between 
up to \m{O(n^2)} level \m{L\<-1} subgraphs, and so on. Following this recursion blindly would 
require up to \m{O(n^{2L+2})} kernel evaluations, which is clearly infeasible.  

The recursive strategy is wasteful because it involves evaluating the same kernel entries 
over and over again in different parts of the recursion tree. 
An alternative solution that requires only \m{O(Ln^2)} kernel evaluations would be to 
first compute \m{\kmls{1}(v,v')} for \emph{all} \m{(v,v')} pairs, then compute \m{\kmls{2}(v,v')} 
for \emph{all} \m{(v,v')} pairs, and so on. But this solution is also wasteful, because 
for low values of \m{\ell}, if \m{v} and \m{v'} are relatively distant, then they will never appear together in 
any level \m{\ell\<+1} subgraph, so \m{\kmls{\ell}(v,v')} is not needed at all. 
 The natural compromise between these two approaches is to use a recursive ``on demand'' kernel computation 
strategy, but once some \m{\kmls{\ell}(v,v')} has been computed, store it in a hash table indexed by \m{(v,v')}, 
so that \m{\kmls{\ell}(v,v')} does not need to be recomputed from scratch. 

A further source of redundancy is that in many real world graph datasets certain 
subgraphs (e.g., functional groups) recur many times over. This leads to potentially large collections 
of kernel evaluations 
\m{\cbrN{\kmls{\ell}(v,v_1'),\ldots,\kmls{\ell}(v,v_z')}} where \m{v_1'\ldots v_z'} are distinct, 
but the corresponding \m{G_\ell(v_1'),\ldots,G_\ell(v_z')} subgraphs are isomorphic 
(including the feature vectors), so the kernel values will all be the same. 
Once again, the solution is to maintain a hash table of all unique subgraphs seen so far, so that when a new 
subgraph is processed, our code can quickly determine whether it is identical to some other subgraph for 
which kernel evaluations have already been computed. Doing this process perfectly would require isomorphism 
testing, which is, of course, infeasible. In practice, a weak test that only detects a subset of 
isomorphic subgraph pairs already makes a large difference to performance.

\ignore{

The more significant expense in computing \m{\kmlg(\Gcal_1,\Gcal_2)} is computing all the MLS kernels 
in the hierarchy. 
Using the naive, purely recursive approach, computing any \m{\kmls{L}(v,v')} requires computing 
up to \m{O(n^2)} \m{\kmls{L-1}} evaluations, each of which, in turn, requires up to \m{O(n^2)} 
\m{\kmls{L-2}} evaluations, etc., leading to an overall complexity of \m{O(n^{2L+2})}, which is 
clearly infeasible. This strategy is very wasteful, because it involves computing the same kernel values 
many times over in different parts of the recursion tree. 

The natural alternative is hashing, which guarantees that for any combination of \m{v} and \m{v'}, 
\m{\kmls{\ell}(v,v')} is computed at most once. Therefore, in the worst case, we have \m{Ln^2/2} 
kernel evaluations. In reality, the number is significantly smaller because for low values of 
\m{\ell}, for distant pairs of vertices \m{\kmls{\ell}(v,v')} never needs to be evaulated. The  
cost of a single \m{\kmls{\ell}(v,v')} evaluation is dominated by computing the eigendecomposition 
of the joint kernel matrix \m{K}, which, in the can be as high as \m{O(n^3)}. Therefore, the total complexity 
of computing \m{\kmlg(\Gcal_1,\Gcal_2)} is \m{O(L n^5)}. Approximating each \m{\kmls{\ell}} in a \m{p}--dimensional 
subspace, as described in Section \ref{sec: kernelized} reduces this cost to \m{O(Lpn^4)}. 

\begin{thm}
The MLG kernel \m{\kmlg} between two graphs of \m{O(n)} vertices can be computed in \m{O(Ln^5)} time. 
Using the rank \m{p} approximation, the kernel can be approximated in \m{O(Lpn^4)} time. 
\end{thm}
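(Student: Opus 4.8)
The plan is to bound the total running time as (number of distinct kernel evaluations that are ever performed) times (cost of a single evaluation), and to show that the first factor is \m{O(Ln^2)} while the second is \m{O(n^3)} in the exact case and \m{O(pn^2)} under the rank-\m{p} approximation. Multiplying then yields the two stated bounds.

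First I would control the number of evaluations. Computing \m{\kmlg(\Gcal_1,\Gcal_2)=\kflg^{\kmls{L}}(\Gcal_1,\Gcal_2)} requires the values of the top-level base kernel \m{\kmls{L}} between pairs of vertices, which by Definition \ref{def: mls} is expressed through \m{\kmls{L-1}}, and so on down to \m{\kmls{1}}. Expanding this recursion blindly re-derives each level along every branch and costs \m{O(n^{2L+2})}, but with the hash-table caching strategy described above, every pair \m{(v,v')} drawn from the \m{O(n)} vertices of \m{\Gcal_1\cup\Gcal_2} triggers the computation of \m{\kmls{\ell}(v,v')} at most once per level \m{\ell}. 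Since there are \m{O(n^2)} such pairs and \m{L} levels, at most \m{O(Ln^2)} distinct MLS evaluations are ever performed, and the final top-level FLG evaluation adds only one more.

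Next I would bound the cost of a single evaluation of \m{\kflg^{\kmls{\ell-1}}(G_\ell(v),G_\ell(v'))} using Proposition \ref{prop: Smx}. The two subgraphs have at most \m{O(n)} vertices, so the joint Gram matrix \m{K} has size at most \m{(n_1+n_2)\times(n_1+n_2)=O(n)\times O(n)}; assembling it costs \m{O(n^2)} base-kernel lookups, each available in constant time from the cache. The dominant step is the eigendecomposition of \m{K}, costing \m{O(n^3)}. Given its eigenvectors, forming \m{\wbar S_1} and \m{\wbar S_2} via \rf{eq: Smat} requires the inverse regularized Laplacians \m{L_\ell^{-1}} together with products of \m{O(n)\times O(n)} matrices, again \m{O(n^3)}, and the determinant evaluations in \rf{eq: subspace} are \m{O(n^3)} as well. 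Crucially, each \m{L_\ell(v)} is purely topological (it does not depend on the base kernel), so the \m{O(n)} inverses needed at a given level can be precomputed once in \m{O(n^4)} total, which is absorbed into the final bound. Multiplying \m{O(Ln^2)} evaluations by \m{O(n^3)} per evaluation gives \m{O(Ln^5)}.

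For the rank-\m{p} bound I would replace the full eigendecomposition of \m{K} by its truncation to the top \m{p} eigenvectors, obtainable by an iterative method in \m{O(pn^2)}. Then \m{Q_1,Q_2} carry only \m{p} columns, so \m{\wbar S_1} and \m{\wbar S_2} are \m{p\times p}; the matrix products in \rf{eq: Smat} cost \m{O(pn^2)} (dominated by \m{L_\ell^{-1}Q}) and the determinants cost \m{O(p^3)\le O(pn^2)}. Hence each evaluation drops to \m{O(pn^2)} and the total to \m{O(Lpn^4)}. The main obstacle is not the per-evaluation linear algebra, which is routine, but the combinatorial accounting of the second paragraph: one must argue that caching genuinely collapses the exponential recursion tree to \m{O(Ln^2)} work, i.e. that no \m{\kmls{\ell}(v,v')} is recomputed even though it may be requested along many branches, and that the associated book-keeping (hash lookups, subgraph extraction, and Laplacian assembly) does not dominate the stated costs.
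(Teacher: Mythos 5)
Your proposal is correct and follows essentially the same route as the paper's own argument: caching/hashing collapses the recursion to $O(Ln^2)$ distinct MLS evaluations, each dominated by the $O(n^3)$ eigendecomposition of the joint Gram matrix (reduced to $O(pn^2)$ under the rank-$p$ truncation), giving $O(Ln^5)$ and $O(Lpn^4)$ respectively. Your version merely fills in more of the bookkeeping (Gram matrix assembly, Laplacian inversions, determinant costs) than the paper does.
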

}

\ignore{
We begin by defining around each vertex \m{v} of \m{\Gcal} a nested sequence of \m{L} neighborhoods
\begin{equation}\label{eq: neighborhoods}
v\subseteq N_1(v)\subseteq N_2(v)\subseteq \ldots \subseteq N_L(v).
\end{equation}
In the simplest case, \m{N_\ell(v)} is just the ball \m{B_r(v)} of radius \m{r} around \m{v} 
(i.e., the set of vertices at a distance at most \m{r} from \m{v}), 
where \m{r} might grow with some power of \m{\ell}, e.g., \m{r\<=2^{\ell+1}}. 
We let \m{G_\ell(v)} 
be the subgraph of \m{\Gcal} induced by \m{N_\ell(v)}, and 
\m{L_\ell(v)} the Laplacian of this subgraph. 

\begin{defn}\label{defn: mls}
Given a collection of graphs, a base kernel \m{\kappa} between their vertices, 
and an appropriate nested sequence of neighborhoods \rf{eq: neighborhoods} for each vertex of each 
graph, the \df{Multiscale Laplacian Subgraph Kernel (MLS kernel)} \m{\kmls{\ell}} 
between two vertices \m{v} and \m{v'} 
(which might or might not be in the same graph) is defined as follows:
\begin{compactenum}[1.]
\item At level \m{\ell\<=1} the MLS kernel is just the FLG kernel \m{\kflg{\kappa}} 
based on \m{\kappa} between the lowest level subgraphs \m{G_1(v)} and \m{G_1(v')}: 
\[\kmls{1}(v,v')=\kflg{\kappa}(G_1(v),G_1(v')).\]
\item At levels \m{\ell=2,3,\ldots,L}, the MLS kernel is the FLG kernel based on \m{\kmls{\ell-1}} 
between the higher level subgraphs \m{G_\ell(v)} and \m{G_\ell(v')}: 
\[\kmls{\ell}(v,v')=\kflg{\kmls{\ell-1}}(G_\ell(v),G_\ell(v')).\]
\end{compactenum}
\end{defn}

\begin{defn}
Given a pair of graphs \m{(\Gcal_1,\Gcal_2)} 
and everything else as in Definition \ref{defn: mls}, the \df{Multiscale Laplacian Graph Kernel (MLG kernel)} 
between them is defined as 
\[\kmlg(\Gcal_1,\Gcal_2)=\kflg{\kmls{L}}(\Gcal_1,\Gcal_2).\]
\end{defn}

\subsection{Computational complexity}

In the simplest case, where the \m{N_\ell(v)} neigborhoods are balls of radius \m{r=r_0 2^{\ell}} 
around each vertex \m{v}, they can be constructed with using the following recursive procedure:
\begin{compactenum}[1.]
\item For \m{\ell\<=1}, find each \m{N_1(v)=B_{r_0}(v)} separately. 
\item For \m{\ell\<=2,3,\ldots,L}, for each \m{v\tin\Gcal} set 
\[N_\ell(v)=\bigcup_{w\in N_{\ell-1}(v)} N_{\ell-1}(w).\]
\end{compactenum}
Assuming that the maximal degree of any node in \m{\Gcal} is \m{D}, the first 
step can be accomplished in \m{O(n D^r_0)} time. 
Each subsequent step involves taking \m{n} separate unions of \m{O(n)} sets of size at most \m{O(n)}.  
Assuming that \m{D^{r_0}<n^2}, this results in a total complexity of \m{O(n^3 L)}. 
Computing the Laplacian of each induced subgraph is 
done in time \m{O(n^2)}, so all the Laplacians can also be computed in time \m{O(n^3L)}.  
}

\section{Linearized Kernels and Low Rank Approximation}\label{sec: lowrank}

Even with caching, MLS and MLG kernels can be expensive to compute. 
The main reason for this is that they involve expressions like \rf{eq: LG}, 
where \m{S_1} and \m{S_2} are initially given in different bases. 
To find a common basis for the two matrices via the method of Proposition \ref{prop: Smx} requires 
a potentially large number of lower level kernel evaluations, which require even lower level 
kernel evaluations, and so on. 
Unfortunately, this process has to be repeated anew for each \m{\cbrN{\Gcal_1,\Gcal_2}} pair, 
because in all \m{\kmls{\ell}(v,v')} evaluations where \m{v} is from one graph and \m{v'} is from the 
other, the common basis will involve both graphs. 
Consequently, the cost of the basis computations cannot be amortized into a per-graph precomputation stage.  

In the previous section we saw that computing the MLG kernel between two graphs may involve \m{O(Ln^2)} 
kernel evaluations.  
At the top levels of the hierarchy each \m{G_\ell(v)} might have \m{\Theta(n)} vertices, so the 
cost of a single FLG kernel evaluation can be as high as \m{O(n^3)}. 
Somewhat pessimistically, this means that the overall 
cost of computing \m{\kflg(\Gcal_1,\Gcal_2)} is \m{O(Ln^5)}. 
Given a dataset of \m{M} graphs, computing their Gram matrix requires repeating this for all 
\m{\cbrN{\Gcal_1,\Gcal_2}} pairs, giving \m{O(LM^2n^5)}, which is even more problematic. 

The solution that we propose is to compute for each level \m{\ell=1,2,\ldots,L\<+1} 
a single joint basis for \emph{all} subgraphs at the given level across \emph{all} graphs \m{\sseq{\Gcal}{M}}.
For concreteness, we go back to the definition of the FLG kernel. 

\begin{defn}\label{def: W}
Let \m{\mathfrak{G}=\cbrN{\sseq{\Gcal}{M}}} be a collection of graphs, \m{\sseq{V}{M}} 
their vertex sets, and assume that \m{\sseq{V}{M}\subseteq \Vcal} for some general vertex space \m{\Vcal}.
Further, assume that \m{\kappa\colon\Vcal\times\Vcal\to\RR} is a positive semi-definite kernel on \m{\Vcal}, 
\m{\Hcal_\kappa} is its Reproducing Kernel Hilbert Space, and \m{\phi\colon \Vcal\to\Hcal_\kappa} 
is the corresponding feature map satisfying \m{\kappa(v,v')=\inp{\phi(v),\phi(v')}} for any \m{v,v'\tin\Vcal}. 
The \df{joint vertex feature space} of \m{\cbrN{\sseq{\Gcal}{M}}} is then 
\[W_{\mathfrak{G}}=\Tspan\sqbbigg{ \bigcup_{i=1}^{M}\bigcup_{v\in V_i} \cbr{\phi(v)}}.\]
\end{defn}

\m{W_{\mathfrak{G}}} is just the generalization of the \m{W} space defined in Proposition \ref{prop: subspace} 
from two graphs to \m{M}. In particular, for any \m{\cbr{\Gcal,\Gcal'}} pair (with \m{\Gcal,\Gcal'\tin\mathfrak{G}}) 
the corresponding \m{W} space will be a subspace of \m{W_{\mathfrak{G}}}. 
The following generalization of Propositions \ref{prop: subspace} and \ref{prop: Smx} is then immediate. 

\begin{prop}\label{prop: Sjoint}
Let \m{N\<=\sum_{i=1}^{M} \abs{V_i}},~ \m{\wbar V=(\sseq{\wbar v}{N})} 
be the concatination of the vertex sets \m{\sseq{V}{M}}, and \m{K} the corresponding Gram matrix 
\begin{equation}\label{eq: jointGram}
K_{i,j}=\kappa(\wbar v_i,\wbar v_j)=\inp{\phi(\wbar v_i), \phi (\wbar v_j)}.
\end{equation}
Let \m{\sseq{\V u}{P}} be a maximal orthonormal set of non-zero eigenvalue eigenvectors of \m{K} 
with corresponding eigenvalues \m{\sseq{\lambda}{P}}. 
Then the vectors 
\begin{equation*}
\xi_i=\ovr{\sqrt{\lambda_i}} \sum_{\ell=1}^{N} [\V u_i]_\ell\: \phi(\wbar v_\ell) \qquad i=1,\ldots,P
\end{equation*}
form an orthonormal basis for \m{W_{\mathfrak{G}}}. 
Moreover, defining \m{Q=[\lambda_1^{1/2}\V u_1, \ldots, \lambda_p^{1/2} \V u_p]\tin\RR^{P\times P}},  
and setting \m{Q_1} to be the submatrix of \m{Q} composed of its first \m{\absN{\!V_1\!}} rows; 
\m{Q_2} be the submatrix composed of the next \m{\abs{\!V_2\!}} rows, and so on,  
for any \m{\Gcal_i,\Gcal_j\tin\mathfrak{G}}, the generalized FLG kernel induced from \m{\kappa} 
(Definition \ref{def: gFLG}) can be expressed as 
\smallpull 
\begin{equation}\label{eq: Sjoint1}
\kflg(\Gcal_i,\Gcal_j)=
\fr{\absbig{\nts \brbig{\ovr{2}\wbar S_i^{-1}\<+\ovr{2} \wbar S_j^{-1}}^{-1}\nts }^{1/2}}
{\absN{\nts \wbar S_i\nts }^{1/4}\; \absN{\nts \wbar S_j\nts }^{1/4}}\:, \smallpull 
\end{equation}
where 
\m{\wbar S_i=Q_i^\top L_i^{-1} Q_i +\gamma I} and 
\m{\wbar S_j=Q_j^\top L_j^{-1} Q_j +\gamma I}.
\end{prop}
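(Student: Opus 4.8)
The plan is to observe that this proposition is nothing more than Propositions \ref{prop: subspace} and \ref{prop: Smx} re-run with $M$ graphs in place of two, and to organize the argument in the same two stages: first establish that $\{\xi_1,\ldots,\xi_P\}$ is an orthonormal basis of $W_{\mathfrak{G}}$, and then show that for any fixed pair $(\Gcal_i,\Gcal_j)$ the generalized FLG kernel of Definition \ref{def: gFLG} collapses to the finite-dimensional expression \eqref{eq: Sjoint1} with $\wbar S_i = Q_i^\top L_i^{-1} Q_i + \gamma I$.

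For the basis claim I would repeat the orthonormality computation of Proposition \ref{prop: Smx} essentially verbatim, the only change being that the summation index now runs over all $N=\sum_i \abs{V_i}$ concatenated vertices rather than $n_1+n_2$. Writing $\xi_k=\lambda_k^{-1/2}\sum_\ell [\V u_k]_\ell\,\phi(\wbar v_\ell)$ and using $\phi(\wbar v_s)^\top\phi(\wbar v_t)=K_{s,t}$, the inner product $\xi_k^\top\xi_{k'}$ reduces to $(\lambda_k\lambda_{k'})^{-1/2}\,\V u_k^\top K\,\V u_{k'}$, which equals $\delta_{k,k'}$ because the $\V u$'s are orthonormal eigenvectors of $K$. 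Each $\xi_k$ lies in $W_{\mathfrak{G}}$ by construction, and $P=\mathrm{rank}(K)=\dim W_{\mathfrak{G}}$, so the orthonormal set is in fact a basis.

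For the kernel formula the key observation is that the non-$\gamma$ part of $S_i$ lives entirely in the span $W_i$ of $\Gcal_i$'s feature vectors, and $W_i\subseteq W_{\mathfrak{G}}$ for every $i$; hence $W_{\mathfrak{G}}^\perp\subseteq W_i^\perp\cap W_j^\perp$, so both $S_i$ and $S_j$ act as exactly $\gamma I$ on $W_{\mathfrak{G}}^\perp$. Extending $\{\xi_1,\ldots,\xi_P\}$ to a full orthonormal basis and invoking the rotation invariance of the FLG formula, as in Proposition \ref{prop: subspace}, each of the transformed operators splits as a direct sum $\wbar S\oplus\gamma I$; the determinant-of-direct-sum factorization then cancels the $\gamma$ powers in numerator and denominator and yields \eqref{eq: Sjoint1}. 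Finally, the explicit form of $\wbar S_i$ comes from plugging $\xi_k$ into $\xi_k^\top S_i\xi_{k'}$: the $\gamma I$ term contributes $\gamma\delta_{k,k'}$ by orthonormality, while for the quadratic term one uses $\xi_k^\top\phi(\wbar v_a)=\sqrt{\lambda_k}\,[\V u_k]_a=Q_{a,k}$, so that the double sum over $\Gcal_i$'s vertices assembles into $Q_i^\top L_i^{-1} Q_i$, exactly as in Proposition \ref{prop: Smx}.

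The one point that deserves care — and the place where this is a genuine generalization rather than a literal copy — is that the common subspace here is the global $W_{\mathfrak{G}}$, which for a particular pair $(\Gcal_i,\Gcal_j)$ is in general strictly larger than the pairwise span used in Proposition \ref{prop: subspace}. I would emphasize that this causes no difficulty: the proof of Proposition \ref{prop: subspace} only ever used that $S_i$ and $S_j$ reduce to $\gamma I$ off the chosen subspace, and that property continues to hold for $W_{\mathfrak{G}}$. On the extra directions $W_{\mathfrak{G}}\ominus W_i$ the projected operator $\wbar S_i$ simply inherits a $\gamma$ on its diagonal, which is harmless, since the cancellation argument is insensitive to the dimension of the complement and the same computation goes through unchanged.
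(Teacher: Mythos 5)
Your proof is correct and follows essentially the same route as the paper, which offers no separate argument for Proposition~\ref{prop: Sjoint} and simply declares it an immediate generalization of Propositions~\ref{prop: subspace} and~\ref{prop: Smx}. You correctly identify and dispatch the only point that is not a literal copy --- that the global span $W_{\mathfrak{G}}$ is larger than the pairwise span, so the projected operators pick up extra $\gamma$'s on $W_{\mathfrak{G}}\ominus W$ that cancel in the determinant ratio exactly as in the original direct-sum argument.
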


The significance of Proposition \ref{prop: Sjoint} is that \m{\sseq{S}{M}} are now fixed matrices that 
do not need to be recomputed for each kernel evaluation. Once we have constructed the joint basis \m{\cbr{\sseq{\xi}{P}}}, 
the \m{S_i} matrix of each graph \m{\Gcal_i} can be computed independently, as a precomputation step, and 
individual kernel evaluations reduce to just plugging them into \rf{eq: Sjoint1}. 
At a conceptual level, what Proposition \ref{prop: Sjoint} does it to linearize the kernel \m{\kappa} 
by projecting everything down to \m{W_{\Gfrak}}. 
In particular, it replaces the \m{\cbr{\phi(\wbar{v}_i)}} RKHS vectors with explicit finite dimensional feature 
vectors given by the corresponding rows of \m{Q}, just like we had in the ``unkernelized'' FLG kernel of 
Definition \ref{def: FLG}. 

For our multiscale kernels this is particularly important, because linearizing not just \m{\kflg^{\kappa}}, 
but also \m{\kflg^{\kmls{1}},\kflg^{\kmls{2}},\ldots}, allows us to compute the MLG kernel 
level by level, without recursion. 
After linearizing the base kernel \m{\kappa}, we can attach explicit, finite dimensional vectors to each vertex of 
each graph. Then we compute compute \m{\kflg^{\kmls{1}}} between all pairs of lowest level subgraphs, and 
linearizing this kernel as well, each vertex effectively just gets an updated feature vector. 
Then we repeat the process for \m{\kflg^{\kmls{2}}\ldots\kflg^{\kmls{L}}}, and finally we 
compute the MLG kernel \m{\kmlg{(\Gcal_1,\Gcal_2)}}. 

\subsection{Randomized low rank approximation}

The difficulty in the above approach of course is that at each level \rf{eq: jointGram} is a Gram matrix between 
\emph{all} vertices of \emph{all} graphs, so storing it is already very costly, let along 
computing its eigendecomposition. 
Morever, \m{P=\dim(W_{\Gfrak})} is also very large, so managing the \m{\sseq{\wbar{S}}{M}} matrices 
(each of which is of size \m{P\<\times P}) becomes infeasible.  
The natural alternative is to replace \m{W_{\Gfrak}} by a smaller, \emph{approximate} joint features space,
defined as follows. 

\begin{defn}\label{def: randomized}
Let \m{\Gfrak, \kappa, \Hcal_\kappa} and \m{\phi} be defined as in Definition \ref{def: W}. 
Let \m{\tilde V=(\sseq{\tilde v}{\tilde N})} be \m{\tilde N\<\ll N} vertices sampled from the 
joint vertex set \m{\wbar V=(\sseq{\wbar v}{N})}. Then the corresponding \df{subsampled vertex feature space} is 
\[\tilde W_{\mathfrak{G}}=\Tspan\setofN{\phi(\tilde v)}{\tilde v\tin \tilde V}.\]
\end{defn}

Similarly to before, we construct an orthonormal basis \m{\cbrN{\sseq{\xi}{P}}} for 
\m{\tilde W} by forming the (now much smaller) Gram matrix \m{\tilde K_{i,j}=\kappa(\tilde v_i,\tilde v_j)}, 
computing its eigenvalues and eigenvectors, and setting 
\m{\xi_i=\ovr{\sqrt{\lambda_i}} \sum_{\ell=1}^{\tilde N} [\V u_i]_\ell\: \phi(\tilde v_\ell)}. 
The resulting approximate FLG kernel is  
\begin{equation}\label{eq: Nystrom1}
\kflg(\Gcal_i,\Gcal_j)=
\fr{\absbig{\nts \brbig{\ovr{2}\tilde S_i^{-1}\<+\ovr{2} \tilde S_j^{-1}}^{-1}\nts }^{1/2}}
{\absN{\nts \tilde S_i\nts }^{1/4}\; \absN{\nts \tilde S_j\nts }^{1/4}},   
\end{equation}
where 
\m{\tilde S_i=\tilde Q_i^\top L_i^{-1} \tilde Q_i +\gamma I} and 
\m{\tilde S_j=\tilde Q_j^\top L_j^{-1} \tilde Q_j +\gamma I} are 
the projections of \m{\wbar S_i} and \m{\wbar S_j} to \m{\tilde W_{\Gfrak}}. 
We introduce a further layer of approximation by restricting \m{\tilde W_{\Gfrak}} 
to be the space spanned by the first \m{\tilde P\<<P} basis vectors  
(ordered by descending eigenvalue), effectively doing kernel PCA on \m{\cbr{\phi(\tilde v)}_{\tilde v\in\tilde V}}, 
equivalently, a low rank approximation of \m{\tilde K}. 

Assuming that \m{v^g_j} is the \m{j}'th vertex of \m{\Gcal_g}, 
in constrast to Proposition \ref{prop: Smx}, 
now the \m{j}'th row of \m{\tilde Q_s} consists of the coordinates of 
the \emph{projection} of \m{\phi(v^g_j)} onto \m{\tilde W_{\Gfrak}}, i.e., 
\begin{multline*}
[\tilde Q^g]_{j,i}=\ovr{\sqrt{\lambda_i}} \sum_{\ell=1}^{\tilde N} [\V u_i]_\ell\:\inp{\phi(v^g_j),\phi(\tilde v_N)}=\\
\ovr{\sqrt{\lambda_i}} \sum_{\ell=1}^{\tilde N} [\V u_i]_\ell\:\kappa(v^g_j,\tilde v_N).
\end{multline*}

The above procedure is similar to the popular Nystr\"om approximation for kernel matrices 
\cite{Williams2001,Drineas2005}, 
except that in our case the ultimate goal is not to approximate the Gram matrix \rf{eq: jointGram} itself, 
but the \m{\sseq{S}{M}} matrices used to form the FLG kernel. 
In practice, we found that the eigenvalues of \m{K} usually drop off very rapidly, 
suggesting that \m{W} can be safely approximated by a surprisingly small dimensional subspace 
(\m{\tilde P\sim 10}), and correspondingly the sample size \m{\tilde N} can be kept quite small 
as well (on the order of \m{100}). The combination of these two factors makes computing the entire stack 
of kernels very fast, reducing the complexity of computing the Gram matrix for a dataset of \m{M} 
graphs of \m{\theta(n)} vertices each to \m{O(ML{\tilde N}^2 \tilde P^3+ML{\tilde N}^3+M^2 \tilde P^3)}. 
As an example, for the ENZYMES dataset, comprised of 600 graphs, the FLG kernel between 
all pairs of graphs can be computed in about 2 minutes on a 16 core machine. 

Note that Definition \ref{def: randomized} is noncommittal to the sampling distribution used to select 
\m{(\sseq{\tilde v}{\tilde N})}: 
in our experiments we used uniform sampling without replacement.  
Also note that regardless of the approximations, \m{\sseq{S}{M}} matrices are always positive definite, 
and this fact alone, by the definition of the Bhattacharyya kernel, 
guarantees that the resulting FLG, MLS and MLG kernels are positive semi-definite kernels. 
For a high level pseudocode of the resulting algorithm, see the Supplementary Materials.  

\section{Experiments}\label{sec: experiments}
We compared the efficacy of the MLG kernel with some of the top performing graph kernels from the literature: 
the Weisfeiler--Lehman Kernel, the Weisfeiler--Lehman Edge Kernel \cite{ShervEtal09}, 
the Shortest Path Kernel \cite{BorKri05}, 
the Graphlet Kernel \cite{ShervEtal09},
and the \m{p}-random Walk Kernel \cite{VisBorKonSch10}, 
 on standard benchmark datasets(Table \ref{tbl: datasets}).
\begin{table*}[]
\small
\centering
\vspace{-10pt}
\caption{\label{tbl: res1}Classification Results (Accuracy \m{\pm} Standard Deviation)}
\begin{adjustbox}{width=1\textwidth}
\begin{tabular}{ l c c c c c c}
\hline 
Method & MUTAG\cite{DebnatEtAl} & PTC\cite{ToivonenEtAl}  & ENZYMES\cite{BorOngSchVisetal05} & PROTEINS\cite{BorOngSchVisetal05} & NCI1\cite{WaleEtAl} & NCI109\cite{WaleEtAl} \\
\hline
WL        & \m{84.50(\pm 2.16)} & \m{59.97(\pm 1.60)} & \m{53.75(\pm 1.37)} 
& \m{75.49(\pm 0.57)} & \boldmath{\m{84.76(\pm 0.32)}}& \boldmath{\m{85.12(\pm 0.29)}}
\\
WL-Edge   & \m{82.94(\pm 2.33)} & \m{60.18(\pm 2.19)} & \m{52.00(\pm 0.72)} 
& \m{74.78(\pm 0.59)} & \boldmath{\m{84.65(\pm 0.25)}}& \boldmath{\m{85.32(\pm 0.34)}}
\\
SP                & \m{85.50(\pm 2.50)} & \m{59.53(\pm 1.71)} & \m{42.31(\pm 1.37)} 
& \m{75.61(\pm 0.45)}& \m{73.61(\pm 0.36)}& \m{73.23(\pm 0.26)}
\\
Graphlet 	                & \m{82.44(\pm 1.29)} & \m{55.88(\pm 0.31)} & \m{10.95(\pm 0.69)}
& \m{71.63(\pm 0.33)}& \m{62.40(\pm 0.27)}& \m{62.35(\pm 0.28)}
\\
\m{p}--RW        & \m{80.33(\pm 1.35)} & \m{59.85(\pm 0.95)} & \m{28.17(\pm 0.76)} 	
& \m{71.67(\pm 0.78)}  & TIMED OUT(\textgreater 24hrs) & TIMED OUT(\textgreater 24hrs)
\\
MLG		& \boldmath{ \m{87.94(\pm 1.61)}} &	\boldmath{\m{63.26(\pm 1.48)}} & \boldmath{\m{61.81(\pm 0.99)}}
& \boldmath{\m{76.34(\pm 0.72)}} & \m{81.75(\pm 0.24)} & \m{81.31(\pm 0.22)} \\
\hline	 
\end{tabular}
\end{adjustbox}
\end{table*}

\begin{table}[]
\centering
\caption{\label{tbl: datasets}Summary of the datasets used in our experiments}
\begin{adjustbox}{width=1\columnwidth}
\begin{tabular}{l l l l l l l}
\hline 
Dataset & Size   & Labels& Nodes & Edges & Diameter & Classes   \\
\hline                                             
MUTAG    & 188   & 7     & 17.9 & 39.6  & 8.2      & 2 (125 vs 63) \\
PTC      & 344   & 19    & 25.6 & 51.9  & 8.9      & 2 (192 vs 152)  \\
ENZYMES  & 600   & 3     & 32.6 & 124.3 & 10.9     & 6 (100 each)\\   
PROTEINS & 1113  & 3     & 39.1 & 145.6 & 11.6     & 2 (663 vs 450)\\
NCI1     & 4110  & 37    & 29.9 & 64.6  & 13.3     & 2 (2057 vs 2053)\\
NCI109   & 4127  & 38    & 29.7 & 64.3  & 13.1     & 2 (2079 vs 2048)\\
\hline
\end{tabular}
\vspace{-10pt}
\end{adjustbox}
\end{table}

We perform classification using a binary C-SVM solver \cite{ChangLin11} to test our kernel method. 
We tuned the SVM slack parameter through 10-fold cross-validation using 9 folds for training and 1 for testing, 
repeated 10 times. All experiments were done on a 16 core Intel E5-2670 @ 2.6GHz processor with 32 GB of memory.
Our prediction accuracy and standard deviations are shown in Table \ref{tbl: res1} and runtimes in Table \ref{tbl: res2}. 

The parameters for each kernel were chosen as follows:
for the Weisfeiler--Lehman kernels, the height parameter $h$ is chosen from $\{1,2,..., 5\}$,
the random walk size $p$ for the $p$-random walk kernel was chosen from $\{1, 2,..., 5\}$, for the Graphlets kernel the graphlet size $n$ was chosen from $\{3, 4, 5\}$
as outlined in \cite{ShervEtal09}.
For the parameters of the MLG kernel: we chose $\eta$ and $\gamma$ from $\{0.01, 0.1, 1\}$, radius size $n$ from $\{1, 2, 3, 4\}$, number of levels $l$ from $\{1, 2,3 , 4\}$.
We used the given discrete node labels to create a one-hot binary feature vector for each node and used the dot product between nodes' binary feature vector labels as the
base kernel for the MLG kernel.

\begin{table}[]
\centering
\caption{\label{tbl: res2}Runtime of MLG on different Datasets }
\begin{tabular}{l c c}
\hline 
Dataset     & Wall clock time & CPU time    \\
\hline
MUTAG       & 0min 0.86s      & 0min 5.7s \\
PTC         & 1min 11.18s     & 9min 9.5s \\
ENZYMES     & 0min 36.65s     & 4min 41.2s  \\   
PROTEINS    & 3min 19.8s      & 48min 23.0s  \\   
NCI1        & 5min 36.3s      & 84min 4.8s  \\   
NCI109      & 5min 42.6s      & 84min 35.9s  \\   
\hline
\end{tabular}\vspace{-10pt}
\end{table}

We achieve the highest prediction accuracy for all datasets except NCI1 and NCI109, 
where it performs better than all non-Weisfeiler Lehman kernels. 
Across all datasets, we found the optimal number of levels to be $2$ or $3$ and likewise for the radius size. 
As can be seen from the average number of nodes and average diameter values in Table \ref{tbl: datasets}, the graphs in each dataset are small enough that a $2$ or $3$ level deep MLG kernel is sufficient to effectively characterize the similarity between graphs. The optimal $\eta$ and $\gamma$ values were either $0.01$ or $0.1$ in all cases. In general, these two parameters can be set through cross validation over a small set of values. For two graphs $G$ and $\hat{G}$, that are reasonably similar with only slight differences(ex: $\hat{G}$ is similar to $G$ in degree distribution, connectivity, etc), increasing
the $\eta$ and/or $\gamma$ value will have the effect of artificially increasing the value of $k_{FLG} (G, \hat{G})$, smoothing out their differences. This sort of smoothing
is not desirable for all pairs of graphs, so typically the optimal $\eta$ and $\gamma$ values will be small, often between $0.01$ and $1$.

\section{Conclusions}

In this paper we have proposed two new graph kernels: 
(1) The FLG kernel, which is a very simple single level kernel that combines information attached 
to the vertices with the graph Laplacian;  
(2) The MLG kernel, which is a multilevel, recursively defined kernel that captures topological 
relationships between not just individual vertices, but also subgraphs. 
Clearly, designing kernels that can optimally take into account the multiscale structure of actual chemical 
compounds is a challenging task that will require further work and domain knowledge. 
However, it is encouraging that even just ``straight out of the box'', tuning only one or two parameters, such as 
the number of levels, the MLG kernel performed on par with, or even slightly better than the other well known 
kernels in the literature. 
Beyond just graphs, the general idea of multiscale kernels is of interest for other types of data as well 
(such as images) that have multiresolution structure, and the way that the MLG kernel chains together 
local spectral analysis at multiple scales is potentially applicable to these domains as well, which will be
the subject of further research.

\section*{Acknowledgements}\label{sec: acknowledgements}
This work was completed in part with computing resources provided by the University of Chicago Research
Computing Center.

\clearpage
\bibliographystyle{icml2016}
\bibliography{MSG}
\end{document}